\newtheorem{remark}{Remark}
\newtheorem{proposition}{Proposition}
\newcommand{\conv}{\operatorname{conv}}
\newtheorem{theorem}{Theorem}
\title{A Scalable MIP-based Method for Learning \\ 
    Optimal Multivariate Decision Trees}
\author{%
  Haoran Zhu, Pavankumar Murali, Dzung T. Phan, Lam M. Nguyen, Jayant R. Kalagnanam \\
  IBM Research, Thomas J. Watson Research Center, Yorktown Heights, NY 10598, USA \\
  \texttt{haoran@ibm.com}, \texttt{pavanm@us.ibm.com},  \texttt{phandu@us.ibm.com}, \\ \texttt{LamNguyen.MLTD@ibm.com}, \texttt{jayant@us.ibm.com}
}
\begin{document}

\maketitle

\begin{abstract}
Several recent publications report advances in training optimal decision trees (ODT) using  mixed-integer programs (MIP), due to algorithmic advances in integer programming and a growing interest in addressing the inherent suboptimality of heuristic approaches such as CART. In this paper, we propose a novel MIP formulation, based on a 1-norm support vector machine model, to train a multivariate ODT for classification problems. We provide cutting plane  techniques that tighten the linear relaxation of the MIP formulation, in order to improve run times to reach optimality. Using 36 data-sets from the University of California Irvine Machine Learning Repository, we demonstrate that our formulation outperforms its counterparts in the literature by an average of about 10$\%$ in terms of mean out-of-sample testing accuracy across the data-sets. We provide a scalable framework to train multivariate ODT on large data-sets by introducing a novel linear programming (LP) based data selection method to choose a subset of the data for training.  Our method is able to routinely handle large data-sets with more than 7,000 sample points and outperform heuristics methods and other MIP based techniques. We present results on data-sets containing up to 245,000 samples. Existing MIP-based methods do not scale well on training data-sets beyond 5,500 samples.
\end{abstract}

\section{Introduction}
\label{sec: intro}

Decision tree models have been used extensively in machine learning (ML), mainly due to their transparency which allows users to derive interpretation on the results. Standard heuristics, such as CART \citep{breiman1984classification}, ID3 \citep{quinlan1986induction} and C4.5 \citep{quinlan1996improved}, help balance gains in accuracy with training times for large-scale problems. However, these greedy top-down approaches determine the split at each node one-at-a-time without considering future splits at subsequent nodes. This means that splits at nodes further down in the tree might affect generalizability due to weak performance. Pruning is typically employed to address this issue, but this means that the training happens in two steps -- first, top-down training and then pruning to identify better (stronger) splits. A better approach would be a one-shot training of the entire tree that determines splits at each node with full knowledge of all future splits, while optimizing, say, the misclassification rate \citep{bertsimas2017optimal}. The obtained decision tree is usually referred to as an \emph{optimal decision tree} (\emph{ODT}) in the literature.

The discrete nature of decisions involved in training a decision tree has inspired researchers in the field of Operations Research to encode the process using a mixed-integer programming (MIP) framework \citep{bessiere2009minimising, gunluk2018optimal, rhuggenaath2018learning, verwer2017learning,verwer2019learning}. 
This has been further motivated by algorithmic advances in integer optimization \citep{bertsimas2017optimal}. An MIP-based ODT training method is able to learn the entire decision tree in a single step, allowing each branching rule to be determined with full knowledge of all the remaining rules. Papers on ODT have used optimality criteria such as the average testing length \citep{laurent1976constructing}, training accuracy, the combination of training accuracy and model interpretability \citep{bertsimas2017optimal, hu2019optimal}, and the combination of training accuracy and fairness \citep{aghaei2019learning}. In a recent paper, Aghaei et al. \citep{aghaei2020learning} propose a strong max-flow based MIP formulation to train univariate ODT for binary classification. The flexibility given by the choice of different objective functions, as well as linear constraints in the MIP model, also allows us to train optimal decision trees under various optimality criteria. 

Yet another drawback of heuristic approaches is the difficulty in training multivariate (or oblique) decision trees, wherein splits at a node use multiple variables, or hyperplanes. While multivariate splits are much stronger than univariate (or axis-parallel) splits, they are more complicated because the splitting decision at each node cannot be enumerated. Many approaches have been proposed to train multivariate decision trees including the use of support vector machine (SVM) \citep{bennett1998support}, logistic regression \citep{truong2009fast} and Householder transformation \citep{wickramarachchi2016hhcart}. As noted in \citep{bertsimas2017optimal}, these approaches do not perform well on large-scale data-sets as they also rely on top-down induction for the training process.

The \emph{first contribution} of this paper is a new MIP formulation, SVM1-ODT, for training multivariate decision trees. Our formulation differs from others in the literature -- we use a 1-norm SVM to maximize the number of correctly classified instances and to maximize the margin between clusters at the leaf nodes. We show that this formulation produces ODTs with a higher out-of-sample accuracy compared to the ODTs trained from state-of-the-art MIP models and heuristic methods on 20 data-sets selected from the UCI ML repository.  In Sect.~\ref{sec: experiment}, we report the testing performance of the ODT trained using our formulation and show that is has an average improvement of 6-10$\%$ in mean out-of-sample accuracy for a decision tree of depth 2, and an average improvement of 17-20$\%$ for a decision tree of depth 3. 

The \emph{second contribution} of this paper is towards increasing tractability of the MIP-based ODT training for very large data-sets that are typical in real-world applications. It is imperative to bear in mind that the tractability of MIPs limits the size of the training data that can be used.  Prior MIP-based ODT training formulations \citep{bertsimas2017optimal, verwer2019learning} are intractable for large-sized data-sets (more than 5000 samples) since the number of variables and constraints increase linearly with the size of the training data. We address tractability in two steps. First, we tighten the  LP-relaxation of SVM1-ODT by providing new cutting planes and getting rid of the big-M constant. Second, we propose an efficient linear programming (LP) -based data-selection method
to be used prior to training the tree. This step is comprised of selecting a subset of data points that maximizes the information captured from the entire data-set.

Our method is able to routinely handle large data-sets with more than 7,000 sample points. We present results on data-sets containing up to 245,000 samples. Existing MIP-based methods do not scale well on training data beyond 5,000 samples, and do not provide a significant improvement over a heuristic approach. For large-scale data-sets, when SVM1-ODT is used along with the LP-based data selection method, our results indicate that the resulting decision trees offer higher training and testing accuracy, compared to CART (see Sect.~\ref{sec: experiment}, Figure 2). However, solely using any MIP-based formulation (including ours) without data selection can rarely outperform CART, due to the model becoming intractable resulting in the MIP solver failing to find any better feasible solution than the initially provided warm-start solution. This indicates that any loss of information from data-selection is more than adequately compensated by the use of optimal decision trees (using the SVM1-ODT formulation).

\section{MIP formulation for training multivariate ODT for classification}
\label{sec: odt}

In this section, we present our formulation to train an optimal multivariate classification tree using a data-set comprising numerical features, and for general data-sets containing categorical features, we propose an extension of such formulation in the supplementary material. 
For any $n \in \mathbb{Z}_+$, let $[n]:=$ $\{1, 2, \ldots, n\}$ denote a finite set of data points, $[Y] = \{1, 2, \ldots, Y\}$ be a set of class labels, and $[d] = \{1, 2, \ldots, d\}$ be the index set of all features. Our formulation is established for the balanced binary tree with depth $D$. Let the set of \emph{branch nodes} of the tree be denoted by $\mathcal{B}:= \{1, \ldots, 2^D - 1\}$, and the set of \emph{leaf nodes} be denoted by $\mathcal{L}: = \{2^D, \ldots, 2^{D+1} - 1\}$. Similar to \citep{bertsimas2017optimal}, let $A_R(l)$ and $A_L(l)$ denote the sets of ancestors of leaf node $l$ whose right and left branches, respectively, are on the path from the root node to leaf node $l$. 

Next, we define the variables to be used. Each data point $i \in [n]$ is denoted by $(\textbf{x}_i, y_i)$, where $\textbf{x}_{i}$ is a $d$-dimensional vector, and $y_i \in [Y]$. Since we train multivariate trees, we use a branching hyperplane at branch node $b$, denoted by $\langle \textbf{h}_{b}, \textbf{x}_i \rangle = g_b$, where $g_b$ is the bias term in the hyperplane. Indicator binary variable $c_i =  1$ when $i$ is misclassified and $0$ otherwise. Indicator binary variable $e_{il} = 1$ when $i$ enters leaf node $l$. Variable $\hat{y}_i \in [Y]$ denotes the predicted label for $i$, and the decision variable $u_l \in [Y]$ is the label assigned to leaf node $l$. We let $m_{ib}$ denote the slack variable for the soft margin for each point $i$ corresponding to a hyperplane $\langle \textbf{h}_{b}, \textbf{x}_i \rangle = g_b$ used in the SVM-type model \eqref{eq: sub-problem}.
The objective for the learning problem shown in \eqref{classification4} attempts to minimize the total misclassification ($\sum_i c_i$), the 1-norm SVM margin ($\sum_b \|\textbf{h}_b\|_1$) and the sum of slack variables for classification ambiguity  subtracted from the soft margin ($\sum_{i,b} m_{ib}$). Additionally, $\sum_b \|\textbf{h}_b\|_1$ helps promote sparsity in the decision hyperplanes constructed at the branch nodes of a decision tree during the training process. 
Then, \textbf{SVM1-ODT} can be expressed as follows:
\allowdisplaybreaks
\begin{subequations}
\label{classification4}
\begin{align}
  \min \;\; &\sum_{i \in [n]} c_i +
\alpha_1 \sum_{i \in [n], b \in \mathcal{B}} m_{i b} + \alpha_2 \sum_{b \in \mathcal{B}} \|\textbf{h}_b\|_1 \label{eq: 1a}\\
\text{s.t. }\; &(y_i - Y) c_i \leq   y_i - \hat{y}_i \leq (y_i - 1)  c_i , \forall i \in [n]  \label{eq: 1b}\\
&  \hat{y}_i = \sum_{l \in \mathcal{L}} w_{il}, \forall i \in [n] \label{eq: 1c} \\
& w_{il} \geq e_{il}, u_l - w_{il} + e_{il} \geq 1, \forall i \in [n], l  \in \mathcal{L} \label{eq: 1d}\\
& Y e_{il} + u_l - w_{il} \leq Y, w_{il} \leq Y e_{il}, \forall i \in [n], l  \in \mathcal{L} \label{eq: 1e}\\
& g_b - \sum_{j \in [d]} h_{b j} x_{ij} = p^+_{i b} - p^-_{i b}, \forall i \in [n], b \in \mathcal{B} \label{eq: 1f}\\
& p^+_{i b} \leq M (1 - e_{i l}), \forall i \in [n], l \in \mathcal{L}, b \in A_R(l) \label{eq: 1g}\\
& p^-_{i b} + m_{i b}\geq \epsilon e_{il},  \forall i \in [n], l \in \mathcal{L},b \in A_R(l) \label{eq: 1h}\\
&  p^-_{i b} \leq M(1 - e_{i l}), \forall i \in [n], l \in \mathcal{L}, b \in A_L(l) \label{eq: 1i}\\
&  p^+_{i b}+ m_{i b} \geq \epsilon e_{il},  \forall i \in [n], l \in \mathcal{L},  b \in A_L(l) \label{eq: 1j}\\
&  \sum_{l \in \mathcal{L}} e_{il} = 1, \forall i \in [n] \label{eq: 1k}\\
&\hat{y}_i, w_{il}, h_{b j}, g_{b} \in \mathbb{R}, p^+_{i b}, p^-_{i b}, m_{i b} \in \mathbb{R}_+, e_{il}, c_i \in \{0, 1\}, u_l \in \{1, \ldots, Y\}.
\end{align}
\end{subequations}

We notice that $\textbf{h}_b$, $g_b$ and $u_l$ are main decision variables to characterize a decision tree, while  $\hat{y}_i, w_{il}$, $p^+_{i b}, p^-_{i b}, m_{i b}, e_{il}$ and $c_i$ are derived variables for the MIP model.

Constraint \eqref{eq: 1b} expresses the relationship between $c_i$ and $\hat{y}_i$. If $\hat{y}_i = y_i$, then $c_i = 0$ since $c_i$ is minimized. If $\hat{y}_i \neq y_i$ and $\hat{y}_i < y_i$, then  $(y_i - 1)c_i  \ge  y_i - \hat{y}_i \ge 1$, thus $c_i = 1$, Similarly, if  $\hat{y}_i > y_i$, then $(y_i - Y)c_i \le -1$, thus $c_i = 1$.
In a balanced binary tree with a fixed depth, the branching rule is given by: $i$ goes to the left branch of $b$ if $\langle \textbf{h}_{b}, \textbf{x}_i \rangle \leq g_b$, and goes to the right side otherwise.
Predicted class $\hat{y}_i = \sum_{l \in \mathcal{L}} u_l \cdot e_{il}$. Since $u_l \cdot e_{il}$ is bilinear, we perform McCormick relaxation \citep{mccormick1976computability} of this term using an additional variable $w_{il}$ such that $w_{il} = u_l \cdot e_{il}$. Since $e_{il}$ is binary, this McCormick relaxation is exact. That is to say, $\hat{y}_i = \sum_{l \in \mathcal{L}} u_l \cdot e_{il}$ if and only if \eqref{eq: 1c}-\eqref{eq: 1e} hold for some extra variables $w$.
Since $u_l$ and $e_{il}$ are integer variables, it follows that $\hat{y}_i$ also integral.
Constraints \eqref{eq: 1f}, \eqref{eq: 1g} and \eqref{eq: 1i} formulate the branching rule at each node $b \in \mathcal{B}$: 
if $i$ goes to the left branch at node $b$, then $g_{b} - \sum_{j \in [d]} h_{b j} x_{ij} \geq 0$, and if it goes to the right side, then $g_{b} - \sum_{j \in [d]} h_{b j} x_{ij} \leq 0$. As per MIP convention, we formulate this relationship by separating $g_{b} - \sum_{j \in [d]} h_{b j} x_{ij}$ into a pair of complementary variables $ p^+_{i b}$ and $ p^-_{i b}$ (meaning $ p^+_{i b}$ and $ p^-_{i b}$ cannot both be strictly positive at the same time), and forcing one of these two variables to be 0 through the big-M method \citep{wolsey1998integer}. 
We should remark that this is not exactly the same as our branching rule: when $i$ goes to the right branch, it should satisfy $g_{b} - \sum_{j \in [d]} h_{b j} x_{ij} < 0$ strictly. The only special case is when $p^+_{i b} = p^-_{i b} = 0$. 
However, due to the two constraints \eqref{eq: 1h}, \eqref{eq: 1j}, and the penalizing term $m_{i b} \ge 0$, this phenomenon cannot occur. 
Constraint \eqref{eq: 1k} enforces that each $i$ should be assigned to exactly one leaf node.
For a given dataset, model parameters $\epsilon$, $\alpha_1$ and $\alpha_2$ are tuned via cross-validation. In the following section, we provide explanations for constraints \eqref{eq: 1h}, \eqref{eq: 1j} and objective function \eqref{eq: 1a}, and some strategies to tighten SVM1-ODT.



\subsection{Multi-hyperplane SVM model for ODT}

When constructing a branching hyperplane, we normally want to maximize the shortest distance from this hyperplane to its closest data points. 
For any branching node associated with a hyperplane $\langle \textbf{h}_{b}, \textbf{x}_i \rangle = g_b$ by fixing other parts of the tree, we can view the process of learning the hyperplane as constructing a binary classifier over data points $\{(\textbf{x}_i,\overline{y}_i)\}$ that reach the node. The artificial label $\overline{y}_i \in \{\mbox{left},\mbox{right}\}\;\; (\equiv \{-1,+1\}) $ is derived from the child of the node: $\textbf{x}_i$ goes to the left or the right; that is determined by $e_{il}$. This problem is reduced to an SVM problem for each branching node. Applying the $1$-norm SVM model with soft margin \citep{olvi2000data,zhu20041} to the node $b$ gives us 
\begin{align}
\label{eq: sub-problem1}
\begin{array}{lll}
& \min & \frac{1}{2} \|\textbf{h}_b\|_1+ \alpha_1 \sum_{i\in [n]} m_{i b} \\
& \text{s.t. } & | g_{b} - \sum_{j \in [d]} h_{b j} {x}_{ij} | + m_{i b} \geq \epsilon e_{i l}, \forall i \in [n], l \in \mathcal{L}_b.
\end{array}
\end{align}
Here $\mathcal{L}_b$ is the set of leaf nodes that have node $b$ as an ancestor and $m_{i b}$ denotes the slack variable for soft margin. We slightly modify the original $1$-norm SVM model by using a small constant $\epsilon e_{il}$ in \eqref{eq: sub-problem1} instead of $e_{il}$ to prevent variables from getting too big.
The constraint is only active when $e_{il} = 1$, namely data $i$ enters the branching node $b$, and $e_{il} = 1$ implicitly encodes $\overline{y}_i$. 

We use $1$-norm, instead of the Euclidean norm, primarily because it can be linearized.
The sparsity for $\textbf{h}_b$ targeted heuristically by including the 1-norm term \citep{zhu20041} in the objective allows for feature selection at each branching node. 
As we noted previously, we can express the term $g_{b} - \sum_{j \in [d]} h_{b j} \textbf{x}_{ij}$ as the difference of two positive complementary variables, and the absolute value $ | g_{b} - \sum_{j \in [d]} h_{b j} \textbf{x}_{ij} |$ just equals one of these two complementary variables, depending on which branch such a data point enters. When taking the sum over all branching nodes, we have the following \textit{multi-hyperplane SVM} problem to force data points close to the center of the corresponding sub-region at each leaf node $l \in \mathcal{L}$: 
\begin{align}
\label{eq: sub-problem}
\begin{array}{lll}
& \min & \sum_{b \in A_L(l) \cup A_R(l) } \frac{1}{2} \|\textbf{h}_b\|_1+ \alpha_1 \sum_{i\in [n], b \in A_L(l) \cup A_R(l)} m_{i b} \\[1mm]
& \text{s.t. } & | g_{b} - \sum_{j \in [d]} h_{b j} {x}_{ij} | + m_{i b} \geq \epsilon e_{i l},\\[1mm]
&& \forall i \in [n],  b \in A_L(l) \cup A_R(l).
\end{array}
\end{align}

Note that $\bigcup_{l \in \mathcal L} (A_L(l) \cup A_R(l)) = \mathcal B$. By combining \eqref{eq: sub-problem} over all leaf nodes $l \in \mathcal L$ we obtain: 
\begin{align}
\label{ali: 3_new}
\begin{array}{lll}
& \min & \sum_{b \in \mathcal B } \frac{1}{2} \|\textbf{h}_b\|_1 + \alpha_1 \sum_{i\in [n], b \in\mathcal B} m_{i b} \\
& \text{s.t. } & \eqref{eq: 1f}-\eqref{eq: 1j}.
\end{array}
\end{align}

Adding the misclassification term $\sum_{i }c_i$ back into the objective function, and assigning some regularization parameters, we end up getting the desired objective function in \eqref{eq: 1a}. 

In LP, the minimized absolute value term $|h_{b j}|$ can be easily formulated as an extra variable $h'_{bj}$ and two linear constraints: $h'_{bj} = h^+_{bj} + h^-_{bj}, h_{bj} = h^+_{bj} - h^-_{bj}, h^+_{bj}, h^-_{bj} \geq 0$. We highlight the major difference here from another recent work \citep{bertsimas2017optimal} in using MIP to train ODT. 
First, in their formulation (OCT-H), they consider penalizing the number of variables used across all branch nodes in the tree, in order to encourage model simplicity. Namely, their minimized objective is: $ \sum_{i } c_i + \alpha_1 \sum_{b}  \|\textbf{h}_b\|_0$. However, this requires some additional binary variables for each $h_{bj}$, which makes the MIP model even harder to solve. 
In fact, there is empirical evidence that using the $1$-norm helps with model sparsity (e.g., LASSO \citep{tibshirani1996regression}, 1-norm SVM \citep{zhu20041}.
For that reason, we do not bother adding another $0$-norm regularization term into our objective. 
Secondly, unlike in \citep{bertsimas2017optimal}, we use variables $u_l$ to denote the assigned label on each leaf, where $u_l \in [Y]$. The next theorem shows, in SVM1-ODT \eqref{classification4}, integer $\textbf{u}$ variables can be equivalently relaxed to be continuous between $[1, Y]$. This relaxation is important in terms of optimization tractability.

\begin{theorem}
\label{Theo: 1}
Every integer variable $u_l, l \in \mathcal{L},$ in \eqref{classification4} can be relaxed to be continuous in $[1, Y]$.
\end{theorem}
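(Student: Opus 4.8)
The plan is to show that relaxing each $u_l$ to the interval $[1,Y]$ leaves the optimal value of \eqref{classification4} unchanged, by exhibiting for every feasible point of the relaxed model a feasible point of the original integer-$\textbf{u}$ model whose objective is no larger. Since dropping an integrality restriction can only enlarge the feasible region, the relaxed optimum is automatically a lower bound on the integer optimum; the reverse inequality is what the construction below supplies, and the two together force equality.

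The first step is to pin down where $\textbf{u}$ actually enters the model. The variables $\textbf{h}_b, g_b, p^+_{ib}, p^-_{ib}, m_{ib}, e_{il}$ are constrained only by \eqref{eq: 1f}--\eqref{eq: 1k}, none of which mention $u_l$; hence the entire branching structure, and in particular the routing of each point to a leaf, is decoupled from the choice of labels. By \eqref{eq: 1k} each $i$ enters a unique leaf $l(i)$, the one with $e_{i,l(i)}=1$, and since the McCormick block \eqref{eq: 1c}--\eqref{eq: 1e} is exact (as already established in the text), every feasible point satisfies $\hat y_i = u_{l(i)}$. Thus $u_l$ influences the objective only through the misclassification terms $c_i$ via \eqref{eq: 1b}. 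The crucial observation is that \eqref{eq: 1b} forces $c_i=0$ precisely when $\hat y_i = y_i$, i.e. when $u_{l(i)} = y_i$; because every label $y_i$ is an integer, a non-integer value of $u_l$ matches no $y_i$ and therefore forces $c_i=1$ for every point routed to leaf $l$. This is what makes integer labels weakly dominant.

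Concretely, given any relaxed feasible solution I would keep $\textbf{h}, g, p^{\pm}, m, e$ fixed and, for each leaf $l$ with nonempty point set $S_l=\{i : e_{il}=1\}$, reset $u_l$ to a plurality label $\arg\max_{v\in[Y]}\,|\{i\in S_l : y_i=v\}|$ (and to any integer in $[1,Y]$ if $S_l=\emptyset$), then recompute $\hat y_i = u_{l(i)}$, set $c_i = \mathbf{1}[\hat y_i\neq y_i]$, and set $w_{il}=u_l e_{il}$. Feasibility of the new point is immediate: the untouched constraints \eqref{eq: 1f}--\eqref{eq: 1k} still hold, the relations \eqref{eq: 1c}--\eqref{eq: 1e} hold for $w_{il}=u_l e_{il}$ with any $u_l\in[1,Y]$, and the new $c_i$ satisfy \eqref{eq: 1b} by construction. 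For the objective, the terms $\alpha_1\sum m_{ib}$ and $\alpha_2\sum\|\textbf{h}_b\|_1$ are unchanged, while on each leaf the plurality choice minimizes $\sum_{i\in S_l} c_i = |S_l| - |\{i\in S_l : y_i=u_l\}|$ over all $u_l\in[1,Y]$. Hence $\sum_i c_i$ does not increase, the integer optimum is at most the relaxed optimum, and equality follows.

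The argument is essentially a leaf-by-leaf rounding, so the technical hurdle is modest; the step I expect to require the most care is the decoupling itself, namely establishing rigorously that $\hat y_i = u_{l(i)}$ for a single leaf $l(i)$. This relies jointly on the exactness of the McCormick block \eqref{eq: 1c}--\eqref{eq: 1e} and on the uniqueness of the leaf assignment guaranteed by \eqref{eq: 1k}, and it is precisely what licenses treating the label at each leaf in isolation and concluding that no component of the objective other than $\sum_i c_i$ can react to the rounding.
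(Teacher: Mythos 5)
Your proof is correct, but it takes a genuinely different route from the paper's. The paper argues vertex-integrality: it considers the convex hull of the feasible region with $u$ relaxed to $[1,Y]$ and shows every extreme point has integral $\textbf{u}$, by a case split on whether leaf $l$ is empty (then $u_l$ is free in $[1,Y]$ and must sit at an endpoint) or receives some point $i$ with $e_{il}=1$ (then $u_l=w_{il}=\hat y_i$, and either $c_i=0$ forces $\hat y_i=y_i$, or $c_i=1$ degenerates \eqref{eq: 1b} so that $\hat y_i$ must again be at an endpoint of $[1,Y]$ at an extreme point). You instead give an explicit rounding: fix the branching variables, reset each $u_l$ to the plurality label of its leaf, recompute $w,\hat y,c$, and observe the objective cannot increase because any feasible relaxed point has $c_i\geq \mathbf{1}[\hat y_i\neq y_i]$ while the plurality label minimizes the leaf-wise misclassification count. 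Both arguments are sound and rest on the same structural facts (exactness of the McCormick block and the decoupling of $u$ from \eqref{eq: 1f}--\eqref{eq: 1k}). The paper's version buys the slightly stronger operational guarantee that a solver returning vertex solutions will produce integral $u$ with no post-processing; yours buys a constructive repair procedure valid for arbitrary (non-vertex) relaxed solutions and, as a byproduct, identifies the optimal leaf label as the plurality label. One cosmetic caveat: your phrase that \eqref{eq: 1b} ``forces $c_i=0$ precisely when $\hat y_i=y_i$'' overstates it --- the constraint only \emph{permits} $c_i=0$ when $\hat y_i=y_i$ --- but your argument uses only the correct direction ($\hat y_i\neq y_i \Rightarrow c_i=1$), so nothing breaks.
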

We note that all proofs are delegated to the supplementary material.

%

\subsection{Strategies to tighten MIP formulation}

For MIPs, it is well-known that avoiding the use of a big-M constant can enable us to obtain a tighter linear relaxation of the formulation, which helps an optimization algorithm such as branch-and-bound converge faster to a global solution. The next theorem shows that the big-M constant $M$ can be fixed to be 1 in  \eqref{classification4} for SVM1-ODT, with the idea being to disperse the numerical issues between parameters by re-scaling. The numerical instability for a very small $\epsilon$, as in the theorem below, should be easier to handle by an MIP solver than a very big $M$.


\begin{theorem}
\label{theo: param_equiv}
Let $( \textbf{h}_b',  g_b')_{b \in \mathcal B}, ( u_l')_{l \in \mathcal L}, (\hat y_i', c_i')_{i \in [n]}, (w_{il}', e_{il}')_{i \in [n], l \in \mathcal L}, (p^{+\prime}_{ib}, p^{-\prime}_{ib}, m_{ib}')_{i \in [n], b \in \mathcal B}$ be a feasible solution to \eqref{classification4} with parameters $(\alpha_1, \alpha_2, M, \epsilon)$. Then, it is also an optimal solution to \eqref{classification4} with parameters $(\alpha_1, \alpha_2, M, \epsilon)$, if and only if $( \textbf{h}_b'/M,  g_b'/M)_{b \in \mathcal B}, ( u_l')_{l \in \mathcal L}, (\hat y_i', c_i')_{i \in [n]}, (w_{il}', e_{il}')_{i \in [n], l \in \mathcal L}, (p^{+\prime}_{ib}/M, p^{-\prime}_{ib}/M, m_{ib}'/M)_{i \in [n], b \in \mathcal B}$ is an optimal solution to \eqref{classification4} with parameters $(M \alpha_1, M \alpha_2, 1, \epsilon/M)$. 
\end{theorem}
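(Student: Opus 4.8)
The plan is to exhibit an explicit bijection $\Phi$ between the feasible region of \eqref{classification4} under parameters $(\alpha_1, \alpha_2, M, \epsilon)$ and its feasible region under parameters $(M\alpha_1, M\alpha_2, 1, \epsilon/M)$, and to show that $\Phi$ leaves the objective value unchanged once the objective coefficients are transformed. Concretely, $\Phi$ divides only the ``geometric'' variables by $M$, sending $(\textbf{h}_b, g_b) \mapsto (\textbf{h}_b/M, g_b/M)$ and $(p^+_{ib}, p^-_{ib}, m_{ib}) \mapsto (p^+_{ib}/M, p^-_{ib}/M, m_{ib}/M)$, while fixing all remaining variables $u_l, \hat{y}_i, c_i, w_{il}, e_{il}$. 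Since the big-$M$ constant is positive, the inverse of $\Phi$ is multiplication by $M$, so $\Phi$ is a genuine bijection of the ambient variable space; the content of the proof is that it matches feasible points to feasible points and hence optimal points to optimal points.

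First I would verify that feasibility is preserved, constraint by constraint. Constraints \eqref{eq: 1b}--\eqref{eq: 1e} and \eqref{eq: 1k} involve none of the scaled variables, and so are untouched. Constraint \eqref{eq: 1f} is homogeneous of degree one in $(\textbf{h}_b, g_b, p^+_{ib}, p^-_{ib})$, so dividing every term by $M$ leaves it intact. For the big-$M$ inequalities \eqref{eq: 1g} and \eqref{eq: 1i}, dividing both sides by $M$ turns $p^{\pm}_{ib}\le M(1-e_{il})$ into $(p^{\pm}_{ib}/M)\le 1\cdot(1-e_{il})$, which is exactly the corresponding constraint in the target problem where the big-$M$ constant equals $1$. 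For the margin inequalities \eqref{eq: 1h} and \eqref{eq: 1j}, multiplying the scaled versions through by $M$ recovers $p^{\pm}_{ib}+m_{ib}\ge\epsilon e_{il}$, so these hold under $\epsilon/M$ exactly when the originals hold under $\epsilon$. Nonnegativity of $p^+_{ib}, p^-_{ib}, m_{ib}$ survives because $M>0$, and the integrality and range restrictions on $e_{il}, c_i, u_l$ are untouched. Repeating the computation with $\Phi^{-1}$ shows the correspondence runs both ways.

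Next I would check objective invariance. Evaluating the target objective with coefficients $(M\alpha_1, M\alpha_2)$ at the scaled point gives $\sum_i c_i + M\alpha_1\sum_{i,b}(m_{ib}/M) + M\alpha_2\sum_b\|\textbf{h}_b/M\|_1$, and the factors of $M$ cancel against the $1/M$ scalings, using positive homogeneity of $\|\cdot\|_1$, returning exactly the original value $\sum_i c_i + \alpha_1\sum_{i,b}m_{ib}+\alpha_2\sum_b\|\textbf{h}_b\|_1$. Because $\Phi$ is a feasibility-preserving bijection that fixes the objective value, the two problems share the same optimal value and $\Phi$ carries minimizers to minimizers in both directions, which is precisely the claimed equivalence.

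The argument is essentially bookkeeping, so the one thing I would be careful about is the interaction between the two rescaled parameters and the scaled variables: the constant $M$ sits on the right-hand side of \eqref{eq: 1g}/\eqref{eq: 1i} while $\epsilon$ sits on the right-hand side of \eqref{eq: 1h}/\eqref{eq: 1j}, and one must confirm that resetting $M$ to $1$ and shrinking $\epsilon$ to $\epsilon/M$ are the correct compensating changes for a $1/M$ scaling of $(p^+, p^-, m)$, rather than, for instance, leaving $\epsilon$ unchanged. I do not anticipate a genuine obstacle beyond this matching; and since $u_l$ may be relaxed to $[1,Y]$ by Theorem \ref{Theo: 1}, nothing in the argument depends on whether $u_l$ is treated as integer or continuous.
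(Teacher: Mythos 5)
Your proposal is correct and follows essentially the same route as the paper's own proof: divide the geometric variables $(\textbf{h}_b, g_b, p^{\pm}_{ib}, m_{ib})$ by $M$, verify constraint-by-constraint that \eqref{eq: 1f}--\eqref{eq: 1j} transform into their counterparts with big-$M$ constant $1$ and margin $\epsilon/M$ while the remaining constraints are untouched, and observe that the rescaled objective coefficients $(M\alpha_1, M\alpha_2)$ cancel the $1/M$ scaling so that objective values, and hence optimality, are preserved in both directions. No gaps.
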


Note that $\{( \textbf{h}_b',  g_b')_{b \in \mathcal B}, ( u_l')_{l \in \mathcal L}\}$ and $\{( \textbf{h}_b'/M,  g_b'/M)_{b \in \mathcal B}, ( u_l')_{l \in \mathcal L}\}$ represent the same decision tree, since $\langle \textbf{h}_b', \textbf{h} \rangle \leq g_b'$ is equivalent to $\langle \textbf{h}_b'/M, \textbf{x} \rangle \leq g_b'/M$, at each branch node $b \in \mathcal B.$

In MIP parlance, a \emph{cutting-plane} (also called a \emph{cut}) is a linear constraint that is not part of the original model and does not eliminate any feasible integer solutions.
Pioneered by Gomory \citep{gomory1960solving, gomory2010outline},
cutting-plane methods, as well as branch-and-cut methods, are among the most successful techniques for solving MIP problems in practice. Numerous types of cutting-planes have been studied in integer programming literature and several of them are incorporated in commercial solvers (see, e.g., \citep{cornuejols2008valid, wolsey1998integer}). 
Even though the state-of-the-art MIP solvers can automatically generate cutting-planes during the solving process, these cuts usually do not take the specific structure of the model into consideration. Therefore, most commercial solvers allow the inclusion of user cuts, which are added externally by users in order to further tighten the MIP model. In this section, we propose a series of cuts for SVM1-ODT, and they are added once at the beginning before invoking a MIP solver.
For the ease of notation, we denote by $\mathcal{N}_k$ the set of data points with the same dependent variable value $k$, i.e., $\mathcal{N}_k := \{i \in [n] : y_i = k\}$, for any $k \in [Y].$

\begin{theorem}
\label{theo: cuts}
Given a set $I \subseteq [n]$ with $|I \cap \mathcal{N}_k| \leq 1$ for any $k \in [Y]$. 
Then for any $L \subseteq \mathcal L$,
the inequality 
\begin{equation}
\label{cuts}
\sum_{i \in I} c_i \geq \sum_{i \in I, l \in L} e_{il} - |L|
\end{equation}
is a valid cutting-plane for SVM1-ODT \eqref{classification4}.
\end{theorem}
Here the index set $I$ is composed by arbitrarily picking at most one data point from each class $\mathcal{N}_k$. 

%

Theorem~\ref{theo: cuts} admits $\Omega(n^Y \cdot 2^{2^D})$ number of cuts. 
Next, we list cuts that will be added later as user cuts for our numerical experiments. They all help provide lower bounds for the term $\sum_{i \in [n]} c_i$, which also appear in the objective function of our SVM1-ODT formulation.
Our cutting planes are added once at the beginning before invoking a MIP solver.
\begin{proposition}
\label{prop: cuts}
Let $\{|\mathcal N_k| \mid k \in [Y]\} = \{s_1, \ldots, s_k\}$ with $s_1 \leq s_2 \leq \ldots \leq s_Y$. Then the following inequalities
\begin{itemize}
 \item[1.] $\forall l \in \mathcal{L}, \sum_{i \in [n]} c_i \geq \sum_{i \in [n]} e_{il} - s_Y$;
\item[2.] $\forall l \in \mathcal{L}, \sum_{i \in [n]} (c_i + e_{il}) \geq s_1 \cdot (Y - 2^D + 1)$;
\item[3.] $\sum_{i\in [n]} c_i \geq s_1 + \ldots + s_{Y - 2^D}$ if $Y > 2^D$.
\end{itemize}
are all valid cutting-planes for SVM1-ODT \eqref{classification4}.
\end{proposition}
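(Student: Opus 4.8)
The plan is to derive all three inequalities from one structural fact: in every feasible solution of \eqref{classification4}, a point $i$ that enters a leaf whose assigned label differs from $y_i$ must have $c_i = 1$. I would first prove this. Constraint \eqref{eq: 1k} forces a unique leaf $l^*$ with $e_{il^*}=1$; the exact McCormick system \eqref{eq: 1c}--\eqref{eq: 1e} then gives $\hat y_i = \sum_{l} u_l e_{il} = u_{l^*}$; and the analysis of \eqref{eq: 1b} recalled in the text shows that $\hat y_i \ne y_i$ forces $c_i = 1$. Hence $\sum_{i\in[n]} c_i$ is at least the number of points entering a leaf whose label does not match their class, and it suffices to lower-bound this misclassification count (every term appears with a nonnegative coefficient, so larger $c_i$ only helps). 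Two consequences will be used repeatedly: among the points entering a fixed leaf $l$, only those in $\mathcal N_{u_l}$ can be correct; and any class whose label is carried by no leaf is misclassified in its entirety.

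For inequality~1, fixing $l$, the correct points entering $l$ number at most $|\mathcal N_{u_l}| \le s_Y$, so at least $\sum_{i\in[n]} e_{il} - s_Y$ of the points entering $l$ are misclassified, which already lower-bounds $\sum_{i\in[n]}c_i$. For inequality~3, letting $U$ be the set of distinct leaf labels we have $|U| \le |\mathcal L| = 2^D$, so at least $Y - 2^D$ classes have their label on no leaf and are entirely misclassified; since the total size of any $Y-2^D$ classes is at least $s_1 + \cdots + s_{Y - 2^D}$, the bound follows (the hypothesis $Y > 2^D$ guarantees the index set is nonempty).

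I expect inequality~2 to be the main obstacle, because the crude count via $U$ only yields the factor $Y - 2^D$, whereas the claimed factor is $Y - 2^D + 1$; recovering the extra unit requires examining the labels of the \emph{other} leaves rather than all leaves. Here I would fix $l$, set $U_{-l} := \{u_{l'} : l' \in \mathcal L,\ l' \ne l\}$ so that $|U_{-l}| \le 2^D - 1$, and put $T := [Y]\setminus U_{-l}$, giving $|T| \ge Y - 2^D + 1$. A point in a class belonging to $T$ can be correct only by entering leaf $l$ with $u_l$ equal to its class; since $u_l$ is a single value and all such correct points must enter $l$, there are at most $\sum_{i\in[n]} e_{il}$ of them. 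Consequently the points of classes in $T$ contribute at least $\sum_{k \in T}|\mathcal N_k| - \sum_{i\in[n]} e_{il}$ misclassifications, so $\sum_{i\in[n]} c_i + \sum_{i\in[n]} e_{il} \ge \sum_{k\in T}|\mathcal N_k| \ge |T|\,s_1 \ge (Y - 2^D + 1)\,s_1$. The delicate point to verify is that this single estimate simultaneously covers the case $u_l \in U_{-l}$ (where the $\sum_{i\in[n]} e_{il}$ slack is unused) and the case where $l$ carries a label no other leaf does.
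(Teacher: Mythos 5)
Your proof is correct, but it takes a genuinely different route from the paper's. The paper derives all three inequalities as aggregations of the general cut family in Theorem~\ref{theo: cuts}: for inequality~1 it partitions $[n]$ into $s_Y$ disjoint transversals $I$ (at most one point per class) and sums the cuts \eqref{cuts} with $L=\{l\}$; for inequality~2 it takes $L=\mathcal L\setminus\{l\}$, uses \eqref{eq: 1k} to rewrite $\sum_{l'\in L}e_{il'}=1-e_{il}$, builds $s_1$ disjoint transversals of full cardinality $Y$, and adds trivial bounds $c_i+e_{il}\ge 0$ for the leftover points; inequality~3 is the same pigeonhole argument you give. You instead argue directly from the structural fact that $e_{il^*}=1$ and $u_{l^*}\neq y_i$ force $c_i=1$ (which does follow from \eqref{eq: 1k}, the exactness of \eqref{eq: 1c}--\eqref{eq: 1e}, and \eqref{eq: 1b} for any feasible, not just optimal, solution), and then count misclassifications. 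For inequality~2 your set $T=[Y]\setminus U_{-l}$ of classes carried by no leaf other than $l$ does handle both cases you flag: if $u_l\in U_{-l}$ then no point of a class in $T$ can be correct and the bound $\sum_i c_i\ge\sum_{k\in T}|\mathcal N_k|$ is even stronger than needed, while if $u_l\notin U_{-l}$ the correct points among $\bigcup_{k\in T}\mathcal N_k$ all enter $l$ and number at most $\sum_i e_{il}$; either way $\sum_i(c_i+e_{il})\ge|T|\,s_1\ge(Y-2^D+1)s_1$. What each approach buys: the paper's derivation makes explicit that these cuts are nonnegative combinations of the exponential family \eqref{cuts} (so adding the full family would dominate them), whereas your argument is self-contained, avoids the transversal constructions entirely, and in particular gives a shorter and more transparent proof of inequality~2.
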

Note that the second inequality is only added when $Y \geq 2^D$, and the last lower bound inequality is only added when $Y > 2^D$. As trivial as the last lower bound inequality might seem, in some cases it can be quite helpful. During the MIP solving process, when the current best objective value meets this lower bound value, optimality can be guaranteed and the solver will terminate the branch-and-bound process. Therefore, a tightness of the lower bound has a significant impact on run time.

\section{LP-based data-selection procedure}
\label{sec: data-selection}
As mentioned previously, our main focus is to be able to train ODT over very large training data-sets. 
 For the purpose of scalability, we rely on a data-selection method prior to the actual training process using SVM1-ODT.

The outline for our procedure is as follows:
first, we use a decision tree trained using  a heuristic (e.g., CART) as an initial solution. Next, data-selection is performed on clusters represented by the data points with the same dependent values at each leaf node. Finally, we merge all the data subsets selected from each cluster as the new training data, and use SVM1-ODT to train a classification tree on this data-set. In each cluster, our data-selection is motivated by the following simple heuristic: suppose for a data subset $I_0$  all the points in $\conv\{\textbf{x}_i \mid i \in I_0\}$
are correctly classified as label $y$. Then, we can drop out all the data points that lie in the interior of $\conv\{\textbf{x}_i \mid i \in I_0\}$ from our training set, since by assigning $\{\textbf{x}_i \mid i \in I_0\}$ to the same leaf node and labeling it with $y$, we will also correctly classify all the remaining data points inside their convex combination. With that in mind, a data subset $I_0$ is selected as per the following two criteria: (1) the points within the convex hull $\conv \big(\{\textbf{x}_i \mid i \in I_0\} \big)$
are as many as possible; and (2) $|I_0|$ is as small as possible. In each cluster $\mathcal{N} \subseteq [n]$, the following 0-1 LP can be defined to do data-selection:
\hspace{-5mm}\begin{align}
\label{data-selection}
\hspace{-5mm}\begin{array}{lll}
 \min \; \; & \textbf{f}^T \textbf{a} - \textbf{g}^T \textbf{b} \\
\text{s.t.} \; &-\epsilon' \cdot \textbf{1} \leq b_j \textbf{x}_j - \sum_{ i \neq j} \lambda_{ji} \textbf{x}_i \leq \epsilon' \cdot \textbf{1}, \forall j\in \mathcal{N} \\
& \sum_{ i \neq j} \lambda_{ji} = b_j, \forall j \in \mathcal{N} \\
& 0 \leq \lambda_{ji} \leq a_i, \forall i \neq j \in \mathcal{N}\\
& a_j + b_j \leq 1, \forall j \in \mathcal{N} \\
& a_j, b_j \in \{0, 1\}, \forall j \in  \mathcal{N}.
\end{array}\hspace{-3mm}
\end{align}
Here $\textbf{f}, \textbf{g}$ are two parameter vectors with non-negative components. Data point $\textbf{x}_i$ is selected if $a_i = 1$. Data point $\textbf{x}_j$ is contained in the convex combination of selected data points if $b_j = 1$.
When $\epsilon' = 0$, for any $j \in \mathcal N$ with $b_j = 1$, the first two constraints express $\textbf{x}_j$ as the convex combination of points in $\{\textbf{x}_i \mid \lambda_{ji} > 0\}$. Here we introduce a small constant $\epsilon'$ to allow some perturbation.
The third inequality $0 \leq \lambda_{ji} \leq a_i$ means we can only use selected data points, which are those with $a_i =1$, to express other data points. The last constraint $a_j + b_j \leq 1$ ensures that any selected data point cannot be expressed as a convex combination of other selected data points.
Depending on the choice of $\textbf{f}, \textbf{g}$ and $\epsilon'$, we have many different variants of \eqref{data-selection}. In the next section, we describe one variant of data-selection. We discuss \emph{balanced} data-selection in the supplementary material. 

\subsection{Selecting approximal extreme points}
\label{subsec: extreme point}
We notice that the original 0-1 LP can be formulated to maximize the number of data points inside the convex hull of selected data points by selecting $\textbf{f} = \textbf{0}$ and $\textbf{g} = \textbf{1}$. This special case of \eqref{data-selection} is used because choosing these values allows us to decompose it into $N$ smaller LPs while maximizing the points inside the convex hull. By projecting out variable $\textbf{a}$, the resulting 0-1 LP is equivalent to the following LP, as shown by the next result:
\hspace{-5mm}\begin{align}
\label{data-selection: ch2}
\hspace{-5mm}\begin{array}{lll}
\max \;\; & \sum_{i \in \mathcal{N}} b_i\\
\text{s.t.} \; &-\epsilon' \cdot \textbf{1} \leq b_j \textbf{x}_j - \sum_{ i \neq j} \lambda_{ji} \textbf{x}_i \leq \epsilon' \cdot \textbf{1}, \forall j\in \mathcal{N}  \\
& \sum_{ i \neq j} \lambda_{ji} = b_j, \forall j \in \mathcal{N} \\
& 0 \leq b_j \leq 1, \forall j \in  \mathcal{N}.
\end{array}\hspace{-3mm}
\end{align}
We note that such an LP is decomposable: it can be decomposed into $|\mathcal{N}|$ many small LPs, each with $d+2$ constraints and $|\mathcal{N}|$ variables, and each can be solved in parallel. 

\begin{theorem}
\label{theo: 2}
The following hold.
\begin{enumerate}
\item[1)] If $\epsilon' = 0$, then for any optimal solution $(\textbf{b},\bar{\lambda})$ of \eqref{data-selection: ch2}, there exists $\lambda$ s.t. $(\textbf{b}, \lambda)$ is optimal solution of \eqref{data-selection} with $\textbf{f} = \textbf{0},\textbf{g} = \textbf{1}$, and vice versa; 
\item[2)] If $\epsilon' > 0$, then for any optimal solution $(\textbf{b},\bar{\lambda})$ of \eqref{data-selection: ch2}, there exists $\lambda$ s.t. $(\lfloor \textbf{b} \rfloor, \lambda)$
is an optimal solution of \eqref{data-selection} with $\textbf{f} = \textbf{0},\textbf{g} = \textbf{1}$. Here, $\lfloor \textbf{b} \rfloor$ is a vector with every component being $\lfloor b_i \rfloor$.
\end{enumerate}
\end{theorem}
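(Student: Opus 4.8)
The plan is to treat both parts through a single backbone: \eqref{data-selection: ch2} is the linear relaxation obtained from \eqref{data-selection} (with $\mathbf f=\mathbf 0,\mathbf g=\mathbf 1$) by eliminating $\mathbf a$ and relaxing $\mathbf b$ to $[0,1]$, so feasibility and objectives transfer in the ``easy'' direction and the real work lies in rounding a relaxed optimum back to an integral one of equal value. First I would record the relaxation direction: given any feasible $(\mathbf a,\mathbf b,\lambda)$ of \eqref{data-selection}, discarding $\mathbf a$ yields a feasible point of \eqref{data-selection: ch2} with the same value $\sum_j b_j$, so $\mathrm{opt}\,\eqref{data-selection: ch2}\ge\mathrm{opt}\,\eqref{data-selection}$; this already gives the ``vice versa'' half of part 1). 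Next I would exploit that \eqref{data-selection: ch2} separates over $j$: in block $j$ we maximize $b_j$ subject to $\sum_{i\ne j}\lambda_{ji}=b_j$, $\lambda_{ji}\ge0$, and $\lVert b_j\mathbf x_j-\sum_{i\ne j}\lambda_{ji}\mathbf x_i\rVert_\infty\le\epsilon'$. Writing $\lambda_{ji}=b_j\mu_{ji}$ with $\mu$ a convex weight vector turns the constraint into $b_j\lVert\mathbf x_j-\sum_{i\ne j}\mu_{ji}\mathbf x_i\rVert_\infty\le\epsilon'$, so the block optimum is $b_j^\star=\min\{1,\epsilon'/\delta_j\}$ where $\delta_j:=\mathrm{dist}_\infty(\mathbf x_j,\conv\{\mathbf x_i:i\ne j\})$.

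For part 1) ($\epsilon'=0$) this closed form reads $b_j^\star=1$ when $\mathbf x_j\in\conv\{\mathbf x_i:i\ne j\}$ and $b_j^\star=0$ otherwise, so every optimum of \eqref{data-selection: ch2} is automatically integral, and $\{i:b_i=0\}$ is exactly the set of extreme points of $\{\mathbf x_i\}_{i\in\mathcal N}$. I would then lift such a $\mathbf b$ to \eqref{data-selection} by setting $a_i=1-b_i$; the pair satisfies $a_j+b_j\le1$, and since each point with $b_j=1$ lies in the convex hull of the extreme points, I can re-choose nonnegative weights $\lambda_{ji}$ supported on $\{i:a_i=1\}$ with $\sum_i\lambda_{ji}=1$, so that $\lambda_{ji}\le a_i$ holds. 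This produces a feasible solution of \eqref{data-selection} of the same value, giving $\mathrm{opt}\,\eqref{data-selection}\ge\mathrm{opt}\,\eqref{data-selection: ch2}$; combined with the relaxation inequality the values coincide and the two-way correspondence follows.

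For part 2) ($\epsilon'>0$) the same block formula gives $\lfloor b_j^\star\rfloor=1$ iff $\delta_j\le\epsilon'$. I would first prove the matching upper bound directly on \eqref{data-selection}: if $b_j=1$ there, then $\mathbf x_j$ is $\epsilon'$-close (in $\infty$-norm) to a convex combination of the selected points, hence to $\conv\{\mathbf x_i:i\ne j\}$, forcing $\delta_j\le\epsilon'$; therefore $\sum_j b_j\le|\{j:\delta_j\le\epsilon'\}|=\sum_j\lfloor b_j^\star\rfloor$. To meet this bound I would select the extreme points via $\mathbf a$, set $\mathbf b=\lfloor\mathbf b^\star\rfloor$, and construct, for each index with $\lfloor b_j^\star\rfloor=1$, weights $\lambda_{j\cdot}\ge0$ over the selected points realizing the $\epsilon'$-cover; feasibility of this point at value $\sum_j\lfloor b_j^\star\rfloor$ then certifies optimality, which is the content of $(\lfloor\mathbf b\rfloor,\lambda)$ being an optimum of \eqref{data-selection}.

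The hard part will be the last construction in part 2): showing that all indices with $\delta_j\le\epsilon'$ can be $\epsilon'$-covered \emph{simultaneously} using only the selected points, once those points are barred from being covered themselves (the reinstated coupling $\lambda_{ji}\le a_i$). Unlike the $\epsilon'=0$ case, where ``coverable'' coincides exactly with ``in the convex hull of the extreme points'', a positive $\epsilon'$ lets a point be within $\epsilon'$ of the full hull yet farther from the hull of the retained points alone, so I expect to need the $\min$ structure of $b_j^\star$ together with a smallness assumption on $\epsilon'$ (or a Carath\'eodory-type stability argument for how deleting the floored-$1$ points perturbs the hull) to push the cover through. The integrality and relaxation steps are routine once the block formula is in hand; this covering feasibility is where the genuine argument sits.
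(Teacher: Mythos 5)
Your part 1) is correct and reaches the conclusion by a genuinely different route from the paper. The paper works directly on an optimal $(\textbf{b},\bar{\lambda})$ of \eqref{data-selection: ch2}: it first shows $\textbf{b}$ is binary by a scaling argument, and then repeatedly performs an explicit substitution --- whenever $b_i=1$ and $\bar{\lambda}_{ji}>0$ it replaces $\bar{\lambda}_{ji}$ by $0$ and $\bar{\lambda}_{ji'}$ by $(\bar{\lambda}_{ji}\bar{\lambda}_{ii'}+\bar{\lambda}_{ji'})/(1-\bar{\lambda}_{ji}\bar{\lambda}_{ij})$ --- verifying feasibility after each step until $\lambda_{ji}\le 1-b_i$ holds everywhere, so that $a_i=1-b_i$ works. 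You instead exploit the block decomposition of \eqref{data-selection: ch2} to get the closed form $b_j^{\star}=\min\{1,\epsilon'/\delta_j\}$ and, for $\epsilon'=0$, invoke Minkowski's theorem (a finite point set equals the convex hull of its extreme points) to re-support the weights on $\{i: a_i=1\}$. Your route is more transparent about \emph{why} the reinstated couplings $\lambda_{ji}\le a_i$ and $a_j+b_j\le1$ can be met. Both arguments tacitly need the $\textbf{x}_i$ to be distinct: with duplicated points the paper's denominator $1-\bar{\lambda}_{ji}\bar{\lambda}_{ij}$ can vanish and your ``every covered point lies in the hull of the extreme points'' claim can fail, so neither proof is worse off here.

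For part 2), your upper bound $\sum_j b_j\le|\{j:\delta_j\le\epsilon'\}|=\sum_j\lfloor b_j^{\star}\rfloor$ is correct and amounts to the paper's componentwise-domination argument. The simultaneous-covering construction that you flag as the hard part is a genuine gap --- but it is a gap in the theorem itself, not merely in your attempt. The paper dismisses feasibility of $(\lfloor\textbf{b}^{\star}\rfloor,\lambda)$ with ``follow the same argument as above,'' which does not go through: the substitution step now compounds the $\epsilon'$-errors (the residual becomes $(\mathrm{err}_j+\bar{\lambda}_{ji}\,\mathrm{err}_i)/(1-\bar{\lambda}_{ji}\bar{\lambda}_{ij})$, which can exceed $\epsilon'$), and the conclusion can actually fail. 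Take $d=1$ and $\textbf{x}_1=0$, $\textbf{x}_2=\epsilon'$, $\textbf{x}_3=2\epsilon'$: then $\delta_j\le\epsilon'$ for all three indices, so $\lfloor\textbf{b}^{\star}\rfloor=(1,1,1)$; but in \eqref{data-selection} this forces $\textbf{a}=\textbf{0}$, hence $\lambda=0$ and $\textbf{b}=\textbf{0}$, a contradiction --- the true optimum of \eqref{data-selection} is $2$. So your instinct is exactly right: the statement needs an additional separation hypothesis (e.g.\ $\epsilon'$ strictly below the least positive $\delta_j$, so that $\{j:\delta_j\le\epsilon'\}=\{j:\delta_j=0\}$ and part 2 collapses to part 1). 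Do not try to push the unconditional covering argument through; instead state the extra hypothesis, or weaken the conclusion to the one-sided bound you already proved.
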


\subsection{Data-selection algorithm}
\label{subsec: data_selection_alg}
For each cluster $\mathcal{N}$, let $I_{\mathcal{N}}: = \{j \in \mathcal{N} : b_j = 1\}$.
Note that among those extreme points in $\{i \in \mathcal{N} : \exists j \in I_{\mathcal{N}}, \text{ s.t. } \lambda_{ji} > 0\}$, some of them may be outliers, which will result in $\lambda_{ji}$ being a small number for all $j \in \mathcal{N}$.
From the classic Carath\'eodory's theorem 
 in convex analysis, for any point $\textbf{x}_j$ with $j \in I_{\mathcal{N}}$, it can be written as the convex combination of at most $d+1$ extreme points. Then in that case, there must exist $i$ with $\lambda_{ji} \geq \frac{1}{d+1}$. 
Denote $J_{\mathcal{N}}: = \{i \in \mathcal{N} : \exists j \in I_{\mathcal{N}}, \text{ s.t. } \lambda_{ji} \geq \frac{1}{d+1} \}, K_{\mathcal{N}} := \mathcal{N} \setminus (I_{\mathcal{N}} \cup J_{\mathcal{N}})$.
One can show that the size of set $J_{\mathcal{N}}$ is upper bounded by $(d+1) |I_{\mathcal{N}}|$, so if $|I_{\mathcal{N}}|$ is small, $|J_{\mathcal{N}}|$ is also relatively small.
For those data points in $K_{\mathcal{N}}$, we use a simple heuristic for data-selection inspired by support vector machines, wherein the most critical points for training are those closest to the classification hyperplane. 
That is, for some data-selection threshold number $N$ and the boundary hyperplanes $h \in H$ (hyperplanes for each leaf node), we sort data point $ i \in K_{\mathcal{N}}$ by $\min_{h \in H} \text{dist}(\textbf{x}_i, h)$, in an increasing order, and select the first $N$ points. This heuristic is called \textbf{hyperplane based data-selection} and is presented in Algorithm~\ref{alg1} below. The LP in (7) above is solved in the second step of Algorithm~\ref{alg1}. When $|I_{\mathcal{N}}|$ is relatively large, we can simply select all the extreme points as the new training data. When $|I_{\mathcal{N}}|$ is relatively small, we can select those points with index in $J_{\mathcal{N}}$ as the new training data. The remaining data points in $K_{\mathcal{N}}$ are selected according to the above hyperplane based data-selection heuristic. 

\begin{algorithm}[H]
   \caption{LP-based data-selection in each cluster $\mathcal{N}$}
   \label{alg1}
\begin{algorithmic}
\STATE{\bfseries Given} $\beta_1, \beta_2 \in (0,1), \beta_2 < (d+1) (1-\beta_1)$;
\STATE{\bfseries Solve} LP \eqref{data-selection: ch2} and obtain the optimal solution $(\textbf{b}, \bar{\lambda})$,\\ denote $I_{\mathcal{N}}: = \{i \in \mathcal{N} : b_i = 1\}, \lambda = T(\bar{\lambda})$,\\ $ J_{\mathcal{N}}: = \{i \in \mathcal{N} : \exists j \in I_{\mathcal{N}}, \text{ s.t. } \lambda_{ji} \geq \frac{1}{d+1} \},$ \\ $K_{\mathcal{N}} := \mathcal{N} \setminus (I_{\mathcal{N}} \cup J_{\mathcal{N}})$; 
  \IF{$|I_{\mathcal{N}}| / |\mathcal{N}| \geq 1 - \beta_1$} 
  \STATE{Select $\mathcal{N} \setminus I_{\mathcal{N}}$ as training set;}
  \ELSIF{$|J_{\mathcal{N}}| > \beta_2 |\mathcal{N}|$}
  \STATE{Select $J_{\mathcal{N}}$ as training set;}
  \ELSE
  \STATE{For $K_{\mathcal{N}}$, do \textbf{Hyperplane Data-selection} and pick the first $\beta_2 |\mathcal{N}| - |J_{\mathcal{N}}|$ points, together with $J_{\mathcal{N}}$, as the selected new training set.}
    \ENDIF
  \end{algorithmic}
\end{algorithm}

In Algorithm~\ref{alg1}, $T(\bar{\lambda})$ is the transformation we used in the proof of Theorem~\ref{theo: 2} to construct a feasible solution for \eqref{data-selection}, and $\beta_1, \beta_2$ are some pre-given threshold parameters. For MIP solvers such as CPLEX, the obtained optimal solution $(\textbf{b}, \bar{\lambda})$ is a vertex in the associated polyhedron. In that case, $T(\bar \lambda) = \bar \lambda$. 

For large data-sets, for \eqref{data-selection: ch2}, we observed that it took a considerable amount of time to import the constraints into the LP solver and solve it. However, since LP \eqref{data-selection: ch2} can be decomposed into $|\mathcal N|$ number of much smaller LPs, the computational process can be accelerated dramatically.

\section{Numerical experiments}
\label{sec: experiment}
We present results mainly from two types of numerical experiments to evaluate the performance of our ODT training procedure: (1) benchmark the mean out-of-sample performance of the ODT trained using SVM1-ODT on medium-sized data-sets ($n \leq 7000$), w.r.t. its counterparts in literature; and (2) benchmark the mean out-of-sample performance of the ODT trained using SVM1-ODT together with our data-selection procedure on large-scale data-sets ($7,000 \leq n \leq 245,000$), w.r.t. CART and OCT-H \citep{bertsimas2017optimal}. 
For benchmarking, we use data-sets from the UCI Machine Learning Repository \citep{dheeru2017uci}. 

\begin{table}
\scriptsize
\centering
 \caption{ Performance on data-sets with more than 4 classes using $D=3$. The numbers inside the bracket `$()$' for CART and OCT-H are the numerical results reported in \protect\citep{bertsimas2017optimal}.}
\resizebox{0.65\columnwidth}{!}{%
\begin{tabular}{ p{1.1cm} p{1.6cm} p{1.94cm} p{1.74cm}}
\toprule
  \multicolumn{4}{c}{tree depth $D = 3$} \\
\midrule
 data-set  & Dermatology &  Heart-disease & Image   \\
 \hline
 $n$ & 358 &  297 & 210      \\
  $d$ & 34 & 13  & 19      \\
  $Y$ & 6  & 5  & 7    \\
\midrule
\multicolumn{4}{l}{testing accuracy ($\%$)} \\
\midrule
S1O  & \textbf{98.9}   &   \textbf{65.3}   & \textbf{85.7}  \\
CART &  76.1(78.7) &   55.6 (54.9)     & 57.1 (52.5)  \\
OCT-H & 82.6 (83.4)  &    56.2 (54.7)    &  59.4 (57.4) \\
Fair &  86.4  &   47.2     &   63.3  \\
\midrule
\multicolumn{4}{l}{training accuracy ($\%$)} \\
\midrule
S1O &  \textbf{100}   &   90.2     &  \textbf{100}  \\ 
CART & 80.7  &   68.0   &   57.1  \\ 
OCT-H & 89.4  &    81.9  &   82.7 \\ 
Fair &  \textbf{100}   &  \textbf{92.3}    &  \textbf{100}  \\ 
\bottomrule
 \end{tabular}
}
\label{tab: 2}
\end{table}

\textbf{Accuracy of multivariate ODT: } We tested the accuracy of the ODT trained using our SVM1-ODT against baseline methods CART and OCT-H \citep{bertsimas2017optimal}.

We used the same training-testing split as in \citep{bertsimas2017optimal}, which is 75$\%$ of the entire data-set as training set, and the rest $25\%$ as testing set, with 5-fold cross-validation. The time limit was set to be 15 or 30 minutes for medium-sized data-sets, and for larger data-sets we increased it up to 4 hours to ensure that the solver could make sufficient progress. Due to intractability of MIP models and the loss in interpretability for deep decision trees we only train ODT for small tree depths, similar to \citep{bertsimas2017optimal,gunluk2018optimal,verwer2019learning}. With the exception of Table~\ref{tab: 2}, all results shown in this section and in the supplementary material are for $D = 2$.

For the following numerical results, our SVM1-ODT formulation is abbreviated as ``S1O'', ``OCT-H'' is the MIP formulation to train multivariate ODT in \citep{bertsimas2017optimal},
``Fair'' is the MIP formulation from \citep{aghaei2019learning} without the fairness term in objective, and ``BinOCT'' is from \citep{verwer2019learning}. We implemented all these MIP approaches in Python 3.6 and solved them using CPLEX 12.9.0 \citep{cplex2009v12}. We invoked the \emph{DecisionTreeClassifier} implementation from scikit-learn to train a decision tree using CART,
using default parameter settings. For all methods,  the maximum tree depth was set to be the same as our SVM1-ODT.

\begin{figure*}[b]
\centering
\begin{subfigure}[b]{.50\textwidth}
\includegraphics[width=\textwidth,height=0.17\textheight]{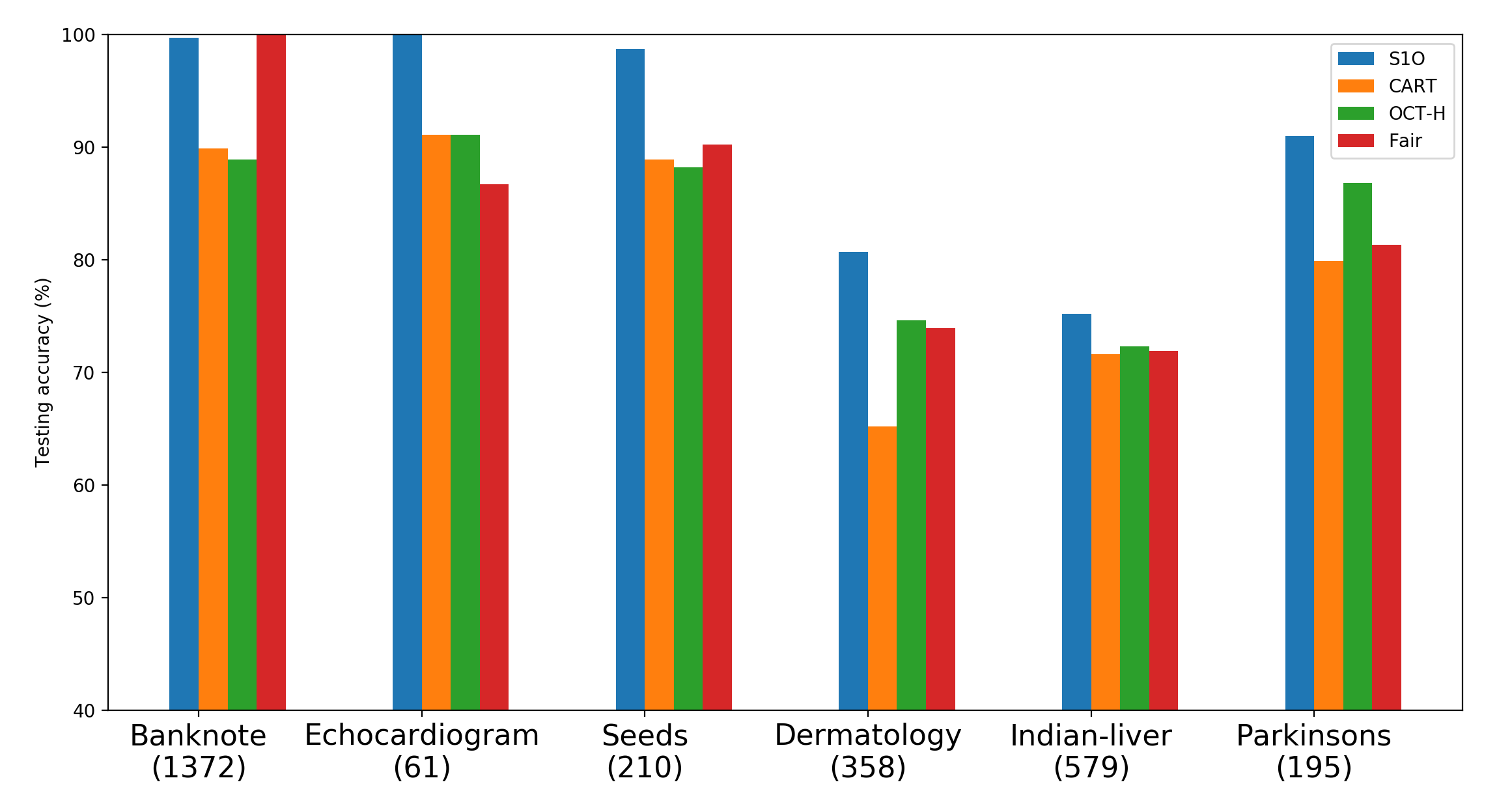}
\caption{\label{fig: 1}}
\end{subfigure}
\hspace{-2mm}
\begin{subfigure}[b]{.50\textwidth}
\includegraphics[trim=0 0 0 0,width=\textwidth,height=0.17\textheight]{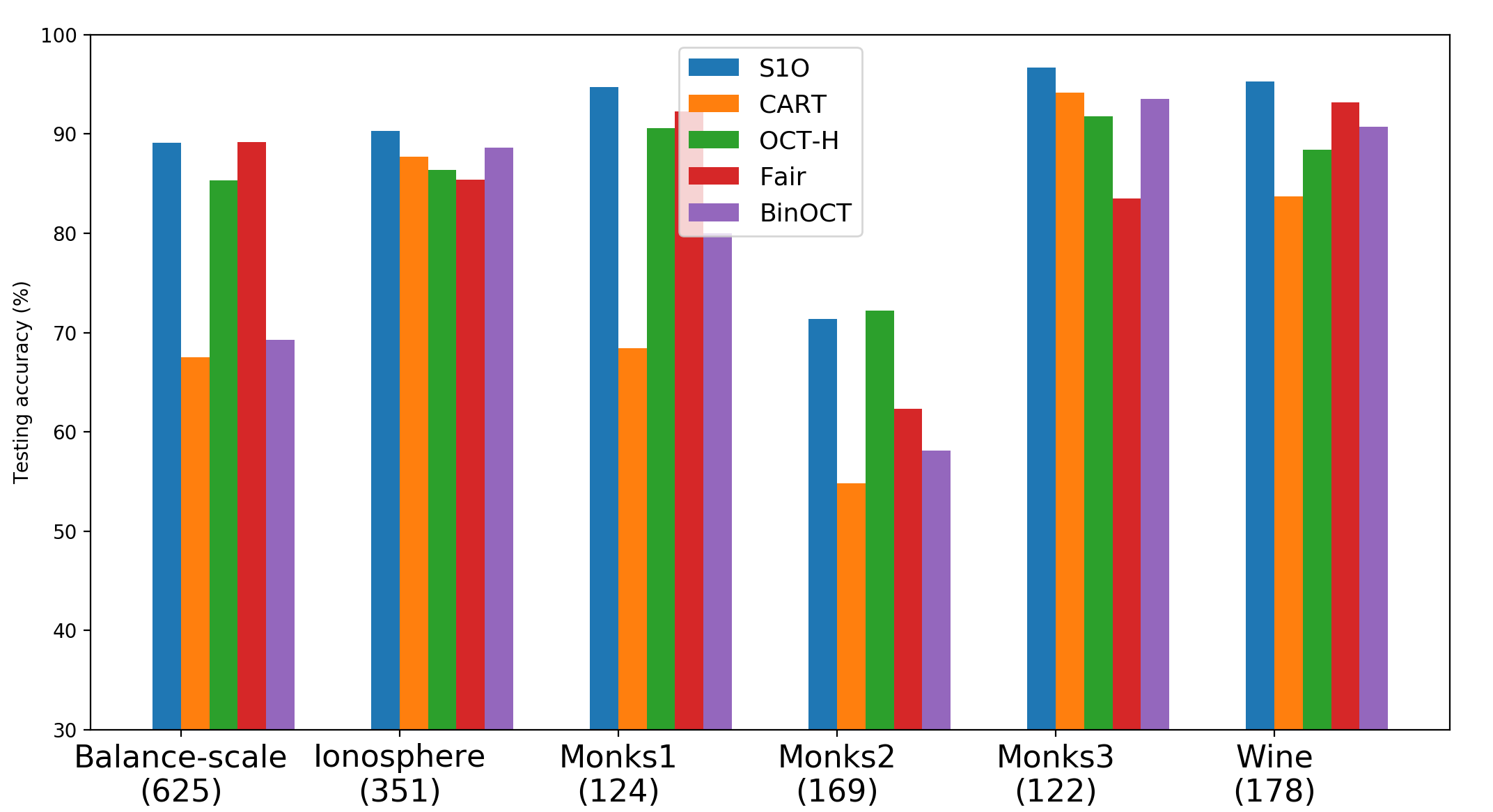}
\caption{\label{fig: 2}}
\end{subfigure}
\caption{Accuracy comparison for multivariate ODT trained on medium-sized data-sets w.r.t \eqref{fig: 1}: OCT-H and Fair ; \eqref{fig: 2}: OCT-H, Fair and BinOCT \protect\citep{verwer2019learning}, $D$ = 2. In-depth results in Tables 3 and 5 in supplementary material.}
\end{figure*}

In Figure~\eqref{fig: 1}, we compare the mean out-of-sample accuracy of the ODT trained from several different MIP formulations. Here the labels on the x-axis represent the names of the data-sets, followed by the data size. In Figure~\eqref{fig: 2}, we have more comparison over the counterparts in literature, along with the BinOCT from \citep{verwer2019learning}.
Table~\ref{tab: 1} shows detailed results about the ODT training, including the running time and training accuracy of different methods.
Moreover, we also tested a few data-sets with more than 4 classes using tree depth 3, with time limit being $30$ minutes, the results are more exciting: For our tested 3 data-sets, we obtained 20.4 percentage points average improvement over CART, 17.2 percentage points improvement over OCT-H, and 17.7 percentage points improvement over Fair. We list the detailed results in Table~\ref{tab: 2}. 



\textbf{Training multivariate ODT on large data-sets:}
We use SVM1-ODT together with the LP-based data-selection (denoted by ``S1O-DS") to train multivariate ODT on large data-sets, which has never been attempted by prior MIP-based training methods in the literature. The time limit for these experiments is 4 hours. Figure~\ref{fig: 3} depicts the performance of our method w.r.t. CART and OCT-H. For solving S1O and OCT-H, we use the decision tree trained using CART as a warm start solution for CPLEX, as in \citep{bertsimas2017optimal, verwer2017auction}. 
For OCT-H, we observe that within the time limit, the solver is either not able to find any new feasible solution other than the one provided as warm start, implying the decision tree trained using OCT-H has the same performance as the one from CART, or the solver simply fails to construct the MIP formulation, which is depicted by the missing bars in the Figure~\ref{fig: 3}. 
This figure basically means that solely relying on any MIP formulation to train ODT using large data-sets will result in no feasible solution being found within the time limit. 

\begin{table}[t]
\centering
 \caption{Accuracy and running time on medium-sized data-sets, for $D$=2. The numbers inside the bracket `$()$' for CART and OCT-H are the numerical results reported in \protect\citep{bertsimas2017optimal}.}
\resizebox{0.75\columnwidth}{!}{%
\begin{tabular}{ p{1.2cm} p{1.2cm} p{1.2cm} p{1.2cm} p{1.2cm}  p{1.2cm} p{1.2cm} p{1.0cm}}
 \toprule
 data-set  &Iris &Congress& Spectf-heart& Breast-cancer &Heart-disease &Image &Hayes-roth \\
 \hline
 $n$ & 150 & 232 & 80 & 683 & 297 & 210 & 132\\
 $d$ & 4& 16 & 44 & 9 & 13 & 19 & 4\\
 $Y$ & 3& 2 & 2 & 2& 5 & 7 & 3\\
\midrule
\multicolumn{4}{l}{testing accuracy ($\%$)} \\
\midrule
S1O & \textbf{98.6} & \textbf{98.0} & \textbf{83.3} &  \textbf{97.6} & \textbf{69.9} & \textbf{55.0} & 71.9\\
 CART & 92.4 (92.4) & 93.5 (98.6) &  72.0 (69.0) & 91.1 (92.3)  & 57.5 (54.1) &42.9 (38.9)& 55.8 (52.7)\\
OCT-H & 94.4 (95.1) & 94.8 (98.6) & 75.0 (67.0) & 96.1 (97.0)  & 56.7 (54.7) & 49.8 (49.1)& 61.2 (61.2)\\
Fair & 90.0 & 91.4 & 57.0 & 95.4  & 56.7 &  46.9 & \textbf{72.2} \\
\midrule
\multicolumn{4}{l}{training accuracy ($\%$)} \\
\midrule
S1O & 98.9 & 98.5 & 85 & 97.3 & \textbf{75.3} & \textbf{56.7} & 76.8\\ 
CART & 96.4 & 96.3 & 88.0 & 93.4 & 58.8 & 42.9 & 60.0\\ 
OCT-H & 99.5 & 96.2 & 92.5 &95.3 & 60.5 & 48.0 & 77.4\\ 
Fair & \textbf{100} & \textbf{100} & \textbf{100}  & \textbf{99.7} & 68.4 & 56.6 & \textbf{82.8}\\ 
\midrule
\multicolumn{4}{l}{running time ($s$)} \\
\midrule
S1O & 8.17 & 1.48 & 0.42 & 517.4 & 900 & 900 & 900\\ 
OCT-H & 1.56 & 727.3 & 900 & 900 & 900 & 900 & 900\\ 
Fair & 11.14 & 1.82 & 0.23 & 803.5 & 900 & 713.7 & 900\\ 
\bottomrule
 \end{tabular}
 }
  \label{tab: 1}
\end{table}

\begin{figure}
\centering
\includegraphics[width=0.7\textwidth,height=0.25\textheight]{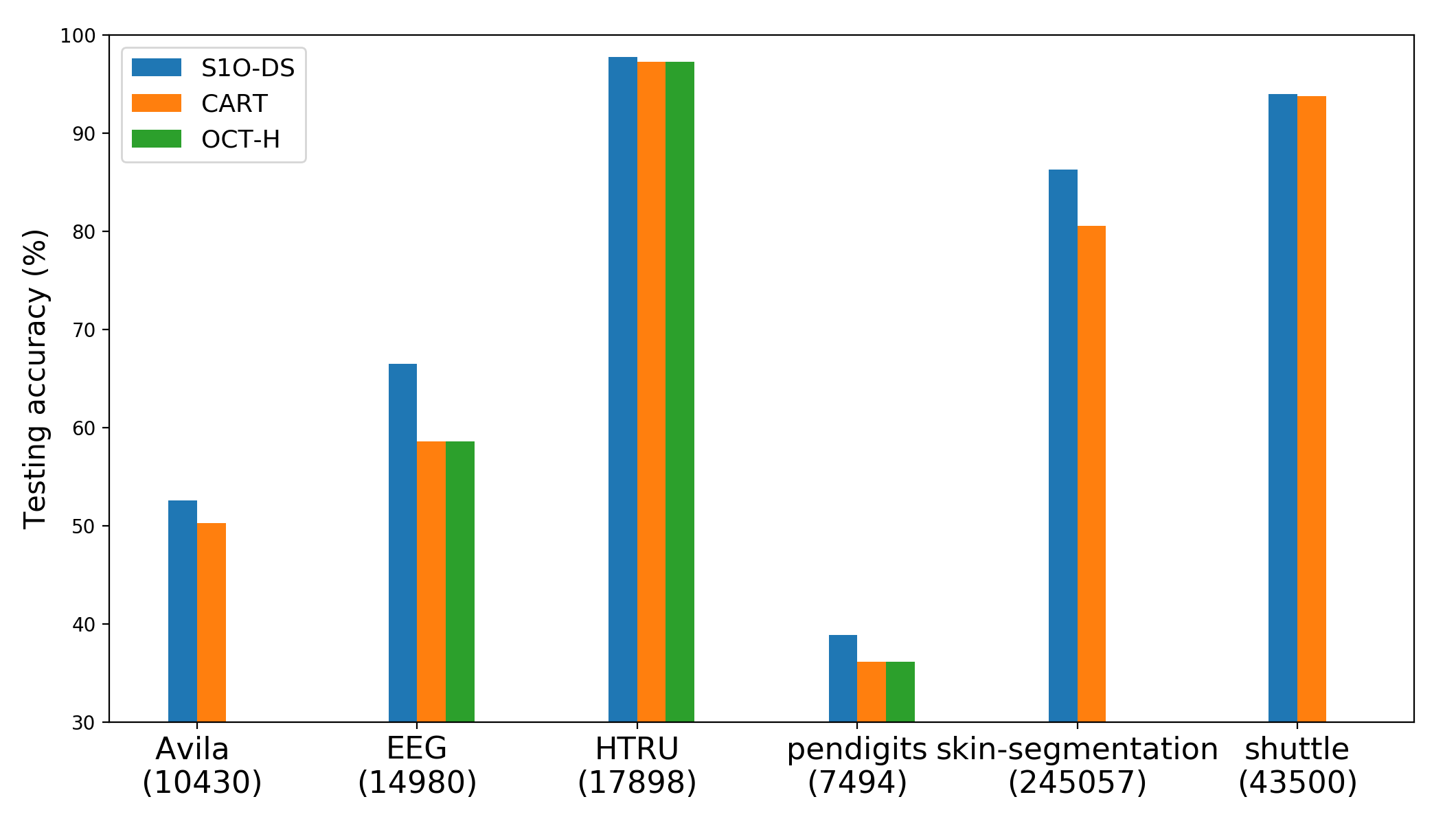}
\caption{Comparison for large data-sets, $D$=2. In-depth results in Table 7 in supplementary material.}
\label{fig: 3}
\end{figure}

\section{Conclusions}
We propose a novel MIP-based method to train an optimal multivariate classification tree, which has better generalization behavior compared to state-of-the-art MIP-based methods. Additionally, in order to train ODT on very large data-sets, we devise an LP-based data-selection method. Numerical experiments suggest that the combination of these two can enable us to obtain a decision tree with better out-of-sample accuracy than CART and other comparable MIP-based methods, while solely using any MIP-based training method will fail to do that almost certainly. In the current setup, data selection occurs prior to training using SVM1-ODT. So, once a data subset has been selected, it is used at every branch node to determine optimal branching rules. A natural extension of this methodology could be a combined model for ODT training and data selection, wherein the branching rules learned at the each layer, is applied to the entire data-set and data selection is redone at every node in the subsequent layer prior to branching.

\newpage
\section*{Broader impact}
Ensemble methods such as random forests and gradient boosting methods such as XGBoost, and LightGBM typically perform well, in terms of scalability and out-of-sample accuracy, for large-scale classification problems. However, these methods suffer from low interpretability and are incapable of modeling fairness. We hope the scalable MIP-based framework proposed in this paper proves to be seminal in addressing applicability of ODTs to large-scale real-world problems, while relying on the decision tree structure to preserve interpretability. The MIP framework might especially come in handy for sequential or joint prediction-optimization models, wherein the problem structure could be utilized to devise decomposition-based solution procedures. Alternatively, the proposed approach could be used to train a classification tree to provide insights on the behavior of a black-box model.

\section*{Funding statement}
The first author of this paper was funded by IBM Research during his internship there.

\bibliography{reference}
\bibliographystyle{plainnat}


\newpage

  \vbox{%
    \hsize\textwidth
    \linewidth\hsize
    \vskip 0.1in
  \hrule height 4pt
  \vskip 0.25in
  \vskip -5.5pt%
  \centering
    {\LARGE\bf{A Scalable Mixed-Integer Programming Based \\ 
    Framework for  Optimal Decision Trees \\
    Supplementary Material, NeurIPS 2020} \par}
      \vskip 0.29in
  \vskip -5.5pt
  \hrule height 1pt
  \vskip 0.09in%
    
  \vskip 0.2in

  }
  
  \begin{appendices}
  
  \section{Supplementary material for Section~\ref{sec: odt}}

\textbf{Theorem~\ref{Theo: 1}. }
\emph{Every integer variable $u_l, l \in \mathcal{L}$ in \eqref{classification4} can be relaxed to be continuous in $[1, Y]$.}

\begin{proof}[Proof of Theorem~\ref{Theo: 1}]
It suffices to show, for any extreme point $V$ of 
$\conv(\bar{P})$, where $\bar{P}$ is the feasible region in \eqref{classification4} while all variables $u_l$ are relaxed to $[1,Y]$, the $\textbf{u}$ components of $V$ are still integer. We denote $V(u_i)$ to be the $u_i$ component in extreme point $V$, analogously we have $V(c_i), V(h_{b,j}) $ and so on. 

Arbitrarily pick $l \in \mathcal L$. If for all $i \in [n], V(e_{il}) = 0$, meaning there is no data point entering leaf node $l$, then we can see that $V(u_l)$ can be any value in $[1, Y]$ and this would maintain the feasibility of point $V$. Since $V$ is an extreme point, $V(u_l)$ has to be either $1$ or $Y$ in this case, which are both integer. 
Now we assume there exists some $i \in [n]$ such that $V(e_{i l}) = 1$, then from \eqref{eq: 1c}-\eqref{eq: 1e}, we see that $V( u_l)  = V(w_{il}) = V( \hat{y}_i )$. From \eqref{eq: 1b}: $(y_i - Y) V(c_i) \leq   y_i -V( \hat{y}_i) \leq (y_i - 1)  V(c_i)$,
here $V(c_i ) \in \{0,1\}$, we get: when $V(c_i) = 0$, then $V(\hat{y}_i ) = y_i$, which implies $V(u_l) = y_i$, it is an integer. If $V(c_i) = 1$, then (1b) degenerates to $V(\hat{y}_i) \in [1, Y]$. From the extreme point assumption on $V$, we know in this case $V(\hat{y}_i)$ should be either $1$ or $Y$, then from $V(u_l) = V(\hat{y}_i)$, we know $V(u_l)$ is still integral.
\end{proof}

\textbf{Theorem~\ref{theo: param_equiv}. }
\emph{Let $( \textbf{h}_b',  g_b')_{b \in \mathcal B}, ( u_l')_{l \in \mathcal L}, (\hat y_i', c_i')_{i \in [n]}, (w_{il}', e_{il}')_{i \in [n], l \in \mathcal L}, (p^{+\prime}_{ib}, p^{-\prime}_{ib}, m_{ib}')_{i \in [n], b \in \mathcal B}$ be a feasible solution to \eqref{classification4} with parameters $(\alpha_1, \alpha_2, M, \epsilon)$. Then it is further an optimal solution to \eqref{classification4} with parameters $(\alpha_1, \alpha_2, M, \epsilon)$, if and only if $( \textbf{h}_b'/M,  g_b'/M)_{b \in \mathcal B}, ( u_l')_{l \in \mathcal L}, (\hat y_i', c_i')_{i \in [n]}, (w_{il}', e_{il}')_{i \in [n], l \in \mathcal L}, (p^{+\prime}_{ib}/M, p^{-\prime}_{ib}/M, m_{ib}'/M)_{i \in [n], b \in \mathcal B}$ is an optimal solution to \eqref{classification4} with parameters $(M \alpha_1, M \alpha_2, 1, \epsilon/M)$. 
}

\begin{proof}[Proof of Theorem~\ref{theo: param_equiv}]
Let $( \textbf{h}_b',  g_b')_{b \in \mathcal B}, ( u_l')_{l \in \mathcal L}, (\hat y_i', c_i')_{i \in [n]}, (w_{il}', e_{il}')_{i \in [n], l \in \mathcal L}, (p^{+\prime}_{ib}, p^{-\prime}_{ib}, m_{ib}')_{i \in [n], b \in \mathcal B} $ be an arbitrary feasible solution to \eqref{classification4} with parameters $(\alpha_1, \alpha_2, M, \epsilon)$. So \eqref{eq: 1b}-\eqref{eq: 1e}, \eqref{eq: 1k} hold, and 
\begin{align*}
& g_b' - \sum_{j \in [d]} h_{b j}' \cdot x_{ij} = p^{+\prime}_{i b} - p^{-\prime}_{i b}, \forall i \in [n], b \in \mathcal{B} \\
& p^{+\prime}_{i b} \leq M (1 - e_{i l}'), \forall i \in [n], l \in \mathcal{L}, b \in A_R(l) \\
& p^{-\prime}_{i b} + m_{i b}'\geq \epsilon e_{il}',  \forall i \in [n], l \in \mathcal{L},b \in A_R(l) \\
&  p^{-\prime}_{i b} \leq M(1 - e_{i l}'), \forall i \in [n], l \in \mathcal{L}, b \in A_L(l) \\
&  p^{+\prime}_{i b}+ m_{i b}' \geq \epsilon e_{il}',  \forall i \in [n], l \in \mathcal{L},  b \in A_L(l),
\end{align*}
which is equivalent to:
\begin{align*}
& g_b'/M - \sum_{j \in [d]} h_{b j}'/M \cdot x_{ij} = p^{+\prime}_{i b}/M - p^{-\prime}_{i b}/M, \forall i \in [n], b \in \mathcal{B} \\
& p^{+\prime}_{i b}/M \leq 1 - e_{i l}', \forall i \in [n], l \in \mathcal{L}, b \in A_R(l) \\
& p^{-\prime}_{i b}/M + m_{i b}'/M \geq \epsilon/M e_{il}',  \forall i \in [n], l \in \mathcal{L},b \in A_R(l) \\
&  p^{-\prime}_{i b}/M \leq 1 - e_{i l}', \forall i \in [n], l \in \mathcal{L}, b \in A_L(l) \\
&  p^{+\prime}_{i b}/M + m_{i b}'/M \geq \epsilon/M e_{il}',  \forall i \in [n], l \in \mathcal{L},  b \in A_L(l),
\end{align*}
Note that constraints \eqref{eq: 1b}-\eqref{eq: 1e}, \eqref{eq: 1k} do not involve variables $( h_b,  g_b)_{b \in \mathcal B},  (p^{+}_{ib}, p^{-}_{ib}, m_{ib})_{i \in [n], b \in \mathcal B}$. Therefore, we know that solution $( h_b',  g_b')_{b \in \mathcal B}$, $( u_l')_{l \in \mathcal L},$ $(\hat y_i', c_i')_{i \in [n]}$, $(w_{il}', e_{il}')_{i \in [n], l \in \mathcal L}$, $(p^{+\prime}_{ib}, p^{-\prime}_{ib}, m_{ib}')_{i \in [n], b \in \mathcal B}$ is feasible to \eqref{classification4} with parameters $(M, \epsilon)$, if and only if, $( h_b'/M,  g_b'/M)_{b \in \mathcal B}, ( u_l')_{l \in \mathcal L}, (\hat y_i', c_i')_{i \in [n]}, (w_{il}', e_{il}')_{i \in [n], l \in \mathcal L}, (p^{+\prime}_{ib}/M, p^{-\prime}_{ib}/M, m_{ib}'/M)_{i \in [n], b \in \mathcal B}$ is feasible to \eqref{classification4} with parameters $(1, \epsilon/M)$. 

Furthermore, if $( h_b',  g_b')_{b \in \mathcal B}, \ldots$ is the optimal solution to \eqref{classification4} with parameters $(\alpha_1, \alpha_2, M, \epsilon)$, then for any feasible solution $ ( h_b'',  g_b'')_{b \in \mathcal B}, \ldots$ to \eqref{classification4} with parameters $(M, \epsilon)$, there is 
$$ \sum_{i } c_i' + \alpha_1 \sum_{i, b} m_{i b}' + \alpha_2 \sum_{b,j} |h_{b j}'| \leq  \sum_{i } c_i'' + \alpha_1 \sum_{i, b} m_{i b}'' + \alpha_2 \sum_{b,j} |h_{b j}''|.$$ 
Since $( h_b'/M,  g_b'/M)_{b \in \mathcal B}, ( u_l')_{l \in \mathcal L}, (\hat y_i', c_i')_{i \in [n]}, (w_{il}', e_{il}')_{i \in [n], l \in \mathcal L}, (p^{+\prime}_{ib}/M, p^{-\prime}_{ib}/M, m_{ib}'/M)_{i \in [n], b \in \mathcal B}$ and $( h_b''/M,  g_b''/M)_{b \in \mathcal B}, ( u_l'')_{l \in \mathcal L}, (\hat y_i'', c_i'')_{i \in [n]}, (w_{il}'', e_{il}'')_{i \in [n], l \in \mathcal L}, (p^{+\prime \prime}_{ib}/M, p^{-\prime \prime}_{ib}/M, m_{ib}''/M)_{i \in [n], b \in \mathcal B}$ are both feasible to \eqref{classification4} with parameters $(1, \epsilon/M)$, and the objective function of \eqref{classification4} with parameters $(M \alpha_1, M\alpha_2)$ is $\sum_{i } c_i +M \alpha_1 \sum_{i, b} m_{i b} + M \alpha_2 \sum_{b,j} |h_{b j}|$, therefore:
\begin{align*}
& \sum_{i } c_i' +M \alpha_1 \sum_{i, b} m_{i b}'/M + M \alpha_2 \sum_{b,j} |h_{b j}'|/M \\
 = & \sum_{i } c_i' + \alpha_1 \sum_{i, b} m_{i b}' + \alpha_2 \sum_{b,j} |h_{b j}'| \\
\leq & \sum_{i } c_i'' + \alpha_1 \sum_{i, b} m_{i b}'' + \alpha_2 \sum_{b,j} |h_{b j}''| \\
= &\sum_{i } c_i'' +M \alpha_1 \sum_{i, b} m_{i b}''/M + M \alpha_2 \sum_{b,j} |h_{b j}''|/M.
\end{align*}
Hence $( h_b'/M,  g_b'/M)_{b \in \mathcal B}, ( u_l')_{l \in \mathcal L}, (\hat y_i', c_i')_{i \in [n]}, (w_{il}', e_{il}')_{i \in [n], l \in \mathcal L}, (p^{+\prime}_{ib}/M, p^{-\prime}_{ib}/M, m_{ib}'/M)_{i \in [n], b \in \mathcal B}$ is an optimal solution to \eqref{classification4} with parameters $(M \alpha_1, M \alpha_2, 1, \epsilon/M)$. The other direction is exactly the same.
\end{proof}

\subsection{Cutting-planes for SVM1-ODT}
\label{subsec: cuts_odt}

In this subsection, we provide proofs for Theorem \ref{theo: cuts} and Proposition \ref{prop: cuts}. Note that we denote by $\mathcal{N}_k$ the set of data points with the same dependent value $k$, i.e., $\mathcal{N}_k := \{i \in [n] : {y}_{i} = k\}$, for any $k \in [Y].$

\textbf{Theorem~\ref{theo: cuts}}. 
\textit{Given a set $I \subseteq [n]$ with $|I \cap \mathcal{N}_k| \leq 1$ for any $k \in [Y]$. 
Then for any $L \subseteq \mathcal L$,
the inequality 
\begin{equation*}
\sum_{i \in I} c_i \geq \sum_{i \in I, l \in L} e_{il} - |L|
\end{equation*}
is a valid cutting-plane to SVM1-ODT \eqref{classification4}.}

Here the index set $I$ is composed by arbitrarily picking at most one data point from each class $\mathcal{N}_k$. 

\begin{proof}[Proof of Theorem~\ref{theo: cuts}]
First, we consider the case that $ \sum_{l \in L} e_{il} = 1$ for all $i \in I$. This means for any data point $i \in I$, it is classified into leaf node among $L$. Then inequality \eqref{cuts} reduces to: $\sum_{i \in I} c_i \geq |I| - |L|$. It trivially holds when $|L| \geq |I|$. When $|I| > |L|$, according to the property that $|I \cap \mathcal{N}_k| \leq 1$ for any $k \in [Y]$, we know the data points in $I$ have exactly $|I|$ many different classes. Since all of those data points are entering leaf nodes in $L$, according to the Pigeon Hole Principle, we know at least $|I| - |L|$ of them are misclassified.

Now we consider general case. We can divide $I$ into two parts: $I = I_1 \cup I_2$, where $I_1: = \{i \in I : \sum_{l \in L} e_{il} = 1\}, I_2: = \{i \in I : \sum_{l \in L} e_{il} = 0\}$. Then from our above discussion, we have $\sum_{i \in I_1} c_i \geq \sum_{i \in I_1, l \in L} e_{il} - |L|$. From the definition of $I_2$, there is $\sum_{i \in I_2, l \in L} e_{il} = 0$. Hence:
\begin{align*}
&\sum_{i \in I} c_i   \geq  \sum_{i \in I_1} c_i 
 \geq  \sum_{i \in I_1, l \in L} e_{il} - |L| \\
& = \sum_{i \in I_1, l \in L} e_{il} - |L| + \sum_{i \in I_2, l \in L} e_{il}  
 = \sum_{i \in I, l \in L} e_{il} - |L|.
\end{align*}
We complete the proof.
\end{proof}


\textbf{Proposition~\ref{prop: cuts}}. 
\textit{Let $\{|\mathcal N_k| \mid k \in [Y]\} = \{s_1, \ldots, s_k\}$ with $s_1 \leq s_2 \leq \ldots \leq s_Y$. Then the following inequalities
\begin{itemize}
 \item[1.] $\forall l \in \mathcal{L}, \sum_{i \in [n]} c_i \geq \sum_{i \in [n]} e_{il} - s_Y$;
\item[2.] $\forall l \in \mathcal{L}, \sum_{i \in [n]} (c_i + e_{il}) \geq s_1 \cdot (Y - 2^D + 1)$;
\item[3.] $\sum_{i\in [n]} c_i \geq s_1 + \ldots + s_{Y - 2^D}$ if $Y > 2^D$.
\end{itemize}
are all valid cutting-planes to SVM1-ODT \eqref{classification4}.}


\begin{proof}[Proof of Proposition~\ref{prop: cuts}]
We prove the validity of these inequalities individually.
\begin{itemize}
 \item[1.] Since $s_1 \leq \ldots \leq s_Y$, we can partition the set $[n]$ into $s_Y$ different disjoint $I$s that all have property $|I \cap \mathcal{N}_k| \leq 1$ for any $k \in [Y]$ as in Theorem~\ref{theo: cuts}. Select $\mathcal L = \{l\}$ for an arbitrary $l \in \mathcal L$. Then, for each of the above set $I$ we obtain a cut from \eqref{cuts}. Combining all these $S_Y$ cuts together, we would obtain the desired inequality 
 $\sum_{i \in [n]} c_i \geq \sum_{i \in [n]} e_{il} - s_Y$. 
\item[2.] For any $l \in \mathcal L$, denote $L: = \mathcal L \setminus \{l\}$. Then for index set $I$ with $|I \cap \mathcal{N}_k| \leq 1$ for any $k \in [Y]$, Theorem~\ref{theo: cuts} gives us the inequality $\sum_{i \in I}c_i \geq \sum_{i \in I, l' \in L} e_{il'} - |L|$. From \eqref{eq: 1k} of SVM1-ODT, there is $\sum_{l' \in L} e_{il'} = 1 - e_{il}$. Hence we obtain the inequality 
$$
\sum_{i \in I}(c_i + e_{il}) \geq |I| - 2^D + 1.
$$
Then we construct $s_1$ number of disjoint $I$s each with cardinality $|Y|$, by arbitrarily picking one element from each class $\mathcal N_k$ without replacement. This process can be done for exactly $s_1$ many times. Therefore, we obtain $s_1$ many inequalities for different index set $I$:
$$
\sum_{i \in I}(c_i + e_{il}) \geq |Y| - 2^D + 1.
$$
For those $i \in [n]$ which are not contained in any of these $s_1$ many $I$s, we use the trivial lower bound inequality $c_i + e_{il} \geq 0$. Lastly, we combine all those inequalities together, which leads to:
$$
\sum_{i \in [n]} (c_i + e_{il}) \geq s_1 \cdot (Y - 2^D + 1).
$$
\item[3.] When $Y > 2^D$, since there are more classes than the number of leaf nodes, and all data points have to be categorized into exactly one of those leaf nodes, by the Pigeonhole Principle, we know the number of misclassification is at least $s_1 + \ldots + s_{Y - 2^D}$. 
\end{itemize}
\end{proof}

\subsection{SVM1-ODT for data-set with Categorical Features}
\label{subsec: categ_odt}
Usually in practice, when dealing with data-set with both numerical and categorical features, people will do some feature transformation in pre-processing, e.g., one-hot encoding etc. It is doing fine for most heuristic-based training methods, but since here our training method is MIP-based, the tractability performance can be very dimension-sensitive. Most feature transformation increases the data dimension greatly. In this subsection we are going to talk about the modifications of the formulation SVM1-ODT, in order to train the initial data-set directly without doing feature transformation. 
First we add the following new constraints:

\begin{subequations}
\begin{empheq}{alignat = 2}
& \sum_{j \in \mathcal{F}_c} h_{b j} \leq 1, \forall b \in \mathcal{B}   \\
& s_{b j v} \leq h_{b j},  \forall b \in \mathcal{B}, j \in \mathcal{F}_c, v \in \mathcal{V}_j  \\
& \sum_{j \in \mathcal{F}_c} \sum_{v \in \mathcal{V}_j} s_{b j v} \mathbbm{I} (x_{i,j} = v) \geq e_{il} + \sum_{j \in \mathcal{F}_c} h_{bj} - 1, \\
 & \forall i \in [n], l \in \mathcal{L}, b \in A_L(l) \notag\\
 & \sum_{j \in \mathcal{F}_c} \sum_{v \in \mathcal{V}_j} s_{b j v} \mathbbm{I} (x_{i,j} = v) \leq 2 - e_{il} - \sum_{j \in \mathcal{F}_c} h_{bj}, \\
 & \forall  i \in [n], l \in \mathcal{L}, b \in A_R(l) \notag\\
&  \sum_{j \in \mathcal{F}_c}h_{b j} - 1 \leq  h_{b j} \leq 1 - \sum_{j \in \mathcal{F}_c}h_{b j}, \\
& \forall b \in \mathcal{B},  j \in\mathcal F_q  \notag\\
& s_{b j v}, h_{b j} \in \{0,1\},  \forall b \in \mathcal{B}, j \in \mathcal{F}_c, v \in \mathcal{V}_j 
\end{empheq}
\end{subequations}

Here $\mathcal F_c \subseteq [d]$ denotes the index set of categorical features, and $\mathcal F_q = [d] \setminus \mathcal F_c$ denotes the index set of numerical features. $\mathbbm{I} (x_{i,j} = v)$ is the indicator function about whether $x_{i,j} = v$ or not. For $ j\in \mathcal F_c$, $\mathcal V_j$ denotes the set of all possible values at categorical feature $j$. 

For our above formulation, we are assuming: At each branch node of the decision tree, the branching rule cannot be based on both categorical features and numerical features, and if it is based on categorical feature, then the branching rule is only provided by one single categorical feature. These constraints are formulated by (8a) and (8e) individually, where $h_{bj} \in \{0,1\}$ for $j \in \mathcal{F}_c$ characterizes whether the branching rule at node $b$ is based on categorical feature $j$ or not. Furthermore, we use another binary variable $s_{bjv}$ to denote the conditions that data points must satisfy in order to go left at that node $b$: If the $j$-th component of data point $\textbf{x}_i$ has value $v$ such that $s_{bjv} = 1$, then this data point goes to the left branch. 
Here we allow multiple different values $v \in \mathcal{V}_j$ such that $s_{bjv} = 1$. Clearly $s_{bjv}$ can be positive only when categorical feature $j$ is chosen as the branching feature. So we have constraint (8b).
Note that this branching rule for categorical features have also been considered in \citep{aghaei2019learning}, through different formulation. 
Now we explain how to formulate this categorical branching rule into linear constraints. Still, we start from the leaf node. When $e_{il} = 1$, we know the branching rule at each $b \in A_L(l) \cup A_R(l)$ is applied to data point $\textbf{x}_i$. To be specific, when $b \in A_L(l)$ and $\sum_{j \in \mathcal{F}_c} h_{bj} = 1$, there must exists some feature $j \in \mathcal{F}_c$ and $v \in \mathcal{V}_j$ such that $\textbf{x}_{i,j} = v$ and $s_{bjv} = 1$. Analogously, when $b \in A_R(l)$ and $\sum_{j \in \mathcal{F}_c} h_{bj} = 1$, the previous situation cannot happen. 
Then one can easily see that these two implications are formulated by (8c) and (8d).
Cases of $\sum_{j \in \mathcal{F}_c} h_{bj} = 0$ are not discussed here since in that case the branching rule is given by the branching hyperplane on numerical features, which was discusses in Sect.~\ref{sec: odt}.

Furthermore we should mention that, to train data-set with both numerical and categorical features, we will also modify some constraints in the original formulation \eqref{classification4}: \eqref{eq: 1h} and \eqref{eq: 1j} should be changed to $p^-_{ib} + m_{ib} \geq \epsilon (e_{il} - \sum_{j \in \mathcal{F}_c} h_{bj})$ and $p^+_{ib} + m_{ib} \geq \epsilon (e_{il} - \sum_{j \in \mathcal{F}_c} h_{bj})$. This is because those two constraints are only considered when the branching rule is not given by categorical features, which means $\sum_{j \in \mathcal{F}_c} h_{bj} = 0$.

\section{Supplementary material for Section~\ref{sec: data-selection}}

\textbf{Theorem~\ref{theo: 2}. }
\emph{\begin{enumerate}
\item[1)] If $\epsilon' = 0$, then for any optimal solution $(\textbf{b},\bar{\lambda})$ of \eqref{data-selection: ch2}, there exists $\lambda$ s.t. $(\textbf{b}, \lambda)$ is optimal solution of \eqref{data-selection} with $\textbf{f} = \textbf{0},\textbf{g} = \textbf{1}$, and vice versa; 
\item[2)] If $\epsilon' > 0$, then for any optimal solution $(\textbf{b},\bar{\lambda})$ of \eqref{data-selection: ch2}, there exists $\lambda$ s.t. $(\lfloor \textbf{b} \rfloor, \lambda)$
is an optimal solution of \eqref{data-selection} with $\textbf{f} = \textbf{0},\textbf{g} = \textbf{1}$. Here, $\lfloor \textbf{b} \rfloor$ is a vector with every component being $\lfloor b_i \rfloor$.
\end{enumerate}
}
\begin{proof}[Proof of Theorem~\ref{theo: 2}]
We consider two cases for $\epsilon'$.

$\textbf{Case 1 } (\epsilon' = 0)$: First we show: If $(\textbf{b}, \bar{\lambda})$ is an optimal solution in \eqref{data-selection: ch2}, then there exists $\lambda$ such that $(\textbf{b}, \lambda)$ is optimal in \eqref{data-selection}. Notice that when $\epsilon' = 0, \textbf{b} \in \{0,1\}^{|\mathcal{N}|}$. This is because if $b_j \in (0,1)$, we can multiply $b_j$ and all $\lambda_{ji}$ by $\frac{1}{b_j}$, the new solution is still feasible, with higher objective value, which contradicts to the optimal assumption of $\textbf{b}$. Assuming there exists some $i \in \mathcal{N}$ such that $b_i = 1$ and exists some $j \neq i \in \mathcal{N}$ such that $\bar{\lambda}_{ji} > 0$. Since $b_j = \sum_i \bar{\lambda}_{ji}$, we know $b_j = 1$. Now we transform the original $j$-th row of $\bar{\lambda}$ in this way: replace 
$\bar{\lambda}_{ji}$ by $0$, replace $\bar{\lambda}_{ji'}$ by $\frac{\bar{\lambda}_{ji} \bar{\lambda}_{ii'} + \bar{\lambda}_{ji'}}{1 - \bar{\lambda}_{ji} \bar{\lambda}_{ij}}$ . Here $i' \neq i, j$.
As long as there exists $i \neq j \in \mathcal{N} $ such that $b_i =1, \lambda_{j,i} > 0$, we can do the above transformation to get rid of it. Therefore after doing this transformation for finitely many times, we obtain a new $\lambda$ with $0 \leq \lambda_{ji} \leq 1 - b_i, \forall i \neq j \in \mathcal{N}.$ It remains to show, after each transformation, the first two equations in \eqref{data-selection: ch2} still hold. The first equation holds because $\textbf{x}_j  = \bar{\lambda}_{ji} \textbf{x}_i + \sum_{i' \neq j, i} \bar{\lambda}_{ji'} \textbf{x}_{i'} = \bar{\lambda}_{ji} (\sum_{i' \neq i} \bar{\lambda}_{ii'} \textbf{x}_{i'})+ \sum_{i' \neq j, i} \bar{\lambda}_{ji'} \textbf{x}_{i'} = \bar{\lambda}_{ji} \bar{\lambda}_{ij} \textbf{x}_j + \sum_{i' \neq j, i} (\bar{\lambda}_{ji} \bar{\lambda}_{ii'} + \bar{\lambda}_{ji'}) \textbf{x}_{i'}$. Hence $\textbf{x}_j  = \sum_{i' \neq j, i} \frac{\bar{\lambda}_{ji} \bar{\lambda}_{ii'} + \bar{\lambda}_{ji'}}{1 - \bar{\lambda}_{ji} \bar{\lambda}_{ij}} \textbf{x}_{i'}$. For the second equation we just need to check $1 = \sum_{i' \neq j, i} \frac{\bar{\lambda}_{ji} \bar{\lambda}_{ii'} + \bar{\lambda}_{ji'}}{1 - \bar{\lambda}_{ji} \bar{\lambda}_{ij}}$. This is true because 
$
 \sum_{i' \neq j, i} {(\bar{\lambda}_{ji} \bar{\lambda}_{ii'} + \bar{\lambda}_{ji'})}/{(1 - \bar{\lambda}_{ji} \bar{\lambda}_{ij})}  = {(\sum_{i' \neq j, i}(\bar{\lambda}_{ji} \bar{\lambda}_{ii'} + \bar{\lambda}_{ji'} ) )}{(1 - \bar{\lambda}_{ji} \bar{\lambda}_{ij})}
  = {(\bar{\lambda}_{ji} (\sum_{i' \neq j, i} \bar{\lambda}_{ii'}) + \sum_{i' \neq j, i} \bar{\lambda}_{ji'} )}{(1 - \bar{\lambda}_{ji} \bar{\lambda}_{ij})}  
  = {(\bar{\lambda}_{ji} (1 - \bar{\lambda}_{ij}) + (1 - \bar{\lambda}_{ji}))}{(1 - \bar{\lambda}_{ji} \bar{\lambda}_{ij})} = 1.
$
Therefore, we have shown for any optimal solution $(\textbf{b}, \bar{\lambda})$ of \eqref{data-selection: ch2}, there exists $\lambda$ s.t. $(\textbf{b}, \lambda)$ is feasible in \eqref{data-selection}.
The optimality of such $(\textbf{b}, \lambda)$ in \eqref{data-selection} automatically follows from the fact that \eqref{data-selection: ch2} is a relaxation over \eqref{data-selection}.
Now, we show the other direction. It suffices to show:
If $(\textbf{b}, \lambda)$ is an optimal solution of \eqref{data-selection}, then $(\textbf{b}, \lambda)$ is also an optimal solution of \eqref{data-selection: ch2}.
Clearly $(\textbf{b}, \lambda)$ is feasible in \eqref{data-selection: ch2}. If it is not the optimal solution, meaning 
there exists $(\textbf{b}' , \bar{\lambda})$ in \eqref{data-selection: ch2}, such that $\textbf{1}^T \textbf{b}' > \textbf{1}^T \textbf{b}$. From what we just showed, 
there exists $\lambda'$ such that $(\textbf{b}', \lambda')$ is feasible in \eqref{data-selection}, with $\textbf{1}^T \textbf{b}' > \textbf{1}^T \textbf{b}$, which gives the contradiction.


$\textbf{Case 2 } (\epsilon' > 0)$: Since $\lfloor \textbf{b}^* \rfloor$ is a binary vector, follow the same argument as in above, we can construct $\lambda$ such that $(\lfloor \textbf{b}^* \rfloor, \lambda )$ is a feasible solution to $\eqref{data-selection}$. To show it is also optimal, it suffices to show, any feasible solution in $\eqref{data-selection}$, has objective value no larger than $\sum_{i \in \mathcal{N}} \lfloor b^*_i \rfloor.$
Assume $(b_i)_{i \in \mathcal{N}}$ is part of one feasible solution in $\eqref{data-selection}$. Then apparently it is also part of one feasible solution in $\eqref{data-selection: ch2}$, since $\eqref{data-selection: ch2}$ is simply the relaxation of $\eqref{data-selection}$. Now we show $b_i \leq b^*_i$ for all $i$: when $b_i = 0$ it is trivially true, when $b_i = 1$ then $b^*_i$ should also be 1, since otherwise we can also increase $b^*_i$ to 1 while remain feasible, and this contradicts to the optimality assumption of $\textbf{b}^*$. Hence: $\sum_i \lfloor b_i \rfloor \leq \sum_i \lfloor b^*_i \rfloor$. Since $\textbf{b}$ is binary, then $\sum_i \lfloor b_i \rfloor = \sum_i  b_i$, and this concludes the proof.
\end{proof}

\subsection{Balanced Data-selection}
\label{subsec: balanced data-selection}
In this section we talk about another variant of \eqref{data-selection: ch2}, under the same data-selection framework as we mentioned in Sect.~\ref{sec: data-selection}, called \emph{balanced data-selection}. 
Sometimes when doing data-selection, it might not be a good idea to pick all the extreme points or discard as many points inside the convex hull as possible. One example is Figure~\ref{fig: 5}: In this data-set, most of data points lie inside the rectangle, some of them lie on the circle, and a few of them lie inside the circle while outside the rectangle. If we only care about maximizing the number of points inside the convex hull of entire data points, then we would end up picking all points on the circle, however in this case it is obvious that picking those 4 vertices of the rectangle is a better option since they capture the data distribution better, even though these 4 points are not extreme points of the entire data-set.

\begin{wrapfigure}{r}{0.4\textwidth}
\centering
\begin{tikzpicture}[scale = 0.5]
\draw[red, fill=red!8, very thick] (-2,-1) rectangle (2,1);
\draw[red, thick] (0,0) circle (3);
\filldraw[red] (2.2, 0.1) circle (1pt);
\filldraw[red] (2.5, 1.1) circle (1pt);
\filldraw[red] (1.4, 1.3) circle (1pt);
\filldraw[red] (-1.2, -1.7) circle (1pt);
\filldraw[red] (0.6, -2.0) circle (1pt);
\filldraw[red] (0.3, 2.1) circle (1pt);
\filldraw[red] (-0.7, -1.3) circle (1pt);
\filldraw[red] (-0.7, 1.5) circle (1pt);
\filldraw[red] (-2.7, 0.3) circle (1pt);
\filldraw[red] (2, -1) circle (2pt);
\filldraw[red] (2, 1) circle (2pt);
\filldraw[red] (-2, 1) circle (2pt);
\filldraw[red] (-2, -1) circle (2pt);
\end{tikzpicture}
\caption{In this case, the balanced data-selection is more preferable.}
\label{fig: 5}
\end{wrapfigure}
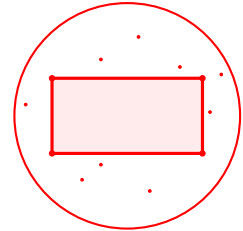

Motivated by the above example, we realize that in many cases the balance between the number of selected points (those with $a_i = 1$ in \eqref{data-selection}) and the number of points inside the convex hull of selected points (points with $b_i = 1$) is important. This can be represented by choosing nonzero objective $\textbf{f}$ and $\textbf{g}$. One obvious choice here is $\textbf{f} = \textbf{g} = \textbf{1}$. However, once we pick nonzero objective $\textbf{f}$, we cannot simply project out $\textbf{a}$ variables by replacing $a_i = 1- b_i$ any more, and 0-1 LP \eqref{data-selection} might also seem intractable to solve optimally in large data-set case. But since this 0-1 LP was proposed by heuristic, we do not have to solve it to optimality. Here we can approximately solve it by looking at the optimal solution of its corresponding linear relaxation:
\begin{align}
\label{data-selection: balanced}
\begin{array}{lll}
& \max & \sum_{i \in \mathcal{N}} (b_i - a_i) \\
& s.t. &b_j \textbf{x}_j = \sum_{i \in \mathcal{N}, i \neq j} \lambda_{ji} \textbf{x}_i, \forall j\in \mathcal{N} \\
&& \sum_{i \in \mathcal{N}, i \neq j} \lambda_{ji} = b_j, \forall j \in \mathcal{N} \\
&& 0 \leq \lambda_{ji} \leq a_i, \forall i \neq j \in \mathcal{N}\\
&& a_j + b_j \leq 1, \forall j \in \mathcal{N} \\
&& a_j, b_j \in [0,1].
\end{array}
\end{align}
\eqref{data-selection: balanced} is the natural linear relaxation for \eqref{data-selection} with $\epsilon' = 0$. 
Then we have the following observations:
\begin{remark}
Assume $(\textbf{a}', \textbf{b}', \lambda')$ is the optimal solution of \eqref{data-selection: balanced}. Then:
\begin{itemize}
\item If $b'_i = 0$ for some $i \in \mathcal{N}$, then $\textbf{x}_i$ cannot be written as the convex combination of some other points;
\item If $a'_i = 0$ for some $i \in \mathcal{N}$, then $b'_i \in \{0,1\}$;
\item If $a'_i  > 0, b'_i > 0$ for some $i \in \mathcal{N}$, then $a'_i + b'_i = 1$.
\end{itemize}
\end{remark}
Hence we know that, by enforcing $b_i = 0$ in \eqref{data-selection} for those $i$ with $b'_i = 0$, it will not change the optimal solution; For those points $\textbf{x}_i$ with $b'_i = 1$ and $\textbf{x}_j$ with $b'_j \in (0,1)$, we expect that it is easier to express $\textbf{x}_i$ as the convex combination of other points than $\textbf{x}_j$, so we greedily assign $b_i = 1, a_i = 0$ in \eqref{data-selection}. The tricky part is about those point with $a'_i > 0, b'_i > 0$, which is used to express other points and can also be expressed by other points in the optimal solution of \eqref{data-selection: balanced}. So we implement the original 0-1 LP \eqref{data-selection} to handle those ``ambiguous'' points. In other words, we will do the following steps:

\begin{algorithm}[H]
   \caption{Approximately get a balanced data-selection solution}
   \label{alg-balanced}
\begin{algorithmic}
\STATE{\bfseries Solve} the LP relaxation \eqref{data-selection: balanced}, and denote 
$(\textbf{a}', \textbf{b}', \lambda)$ to be the optimal solution,
$I_0: = \{i \in \mathcal{N} : b'_i = 0\}, I_1: = \{ i \in \mathcal{N} : b'_i = 1\}$;
\STATE{\bfseries Assign} $b_i = 0$ for $i \in I_0$, $b_i = 1$ for $i \in I_1$  in \eqref{data-selection} with $\textbf{f} = \textbf{g} = \textbf{1}$, and solve it;
\STATE{\bfseries Pick} the index support of the $\textbf{a}$ components in the optimal solution as the selected data subset.
  \end{algorithmic}
\end{algorithm}

\subsection{Iterative ODT training algorithm}

Provided with the framework of using MIP to train ODT and from the nature of our data-selection method, 
in this subsection we want to propose a generic iterative method to continuously obtain more accurate ODTs from subsets of data-set.
We are introducing this iterative training method corresponding to $\eqref{data-selection: ch2}$, where $\textbf{f} = \textbf{0}, \textbf{g} = \textbf{1}$. Note that the balanced data-selection can also be applied to this iterative method. 
We should also mention that this iterative method shares the same spirit as the classic E/M algorithm in statistics.

\begin{algorithm}[h]
   \caption{Iterative ODT training method based on data-selection }
   \label{alg2}
\begin{algorithmic}
  \STATE{\bfseries Initialize} data-selection parameter $\epsilon'$, SVM1-ODT training parameter $\alpha_1, \alpha_2, \epsilon, M$, and initial heuristically generated decision tree $T$;
  \FOR{$t=0,1,2,\dots$}
\STATE  Cluster $\mathcal{N}$ is picked as: the data points of $[n]$ assigned into each leaf node of $T$, which are also been \textbf{correctly classified}. Denote the collection of \textbf{incorrectly classified data} to be $I$;
  \STATE{\bfseries Solve} the data-selection sub-problem $\eqref{data-selection: ch2}$ for each cluster $\mathcal{N}$. Denote $\bar{I}_{\mathcal{N}}: = \{j \in \mathcal{N} : b_j = 1\}, J_{\mathcal{N}}: = \{i \notin \bar{I}_{\mathcal{N}} : \exists j \in \bar{I}_{\mathcal{N}}, \text{ s.t. } \lambda_{ji} > 0\}, K_{\mathcal{N}}: = \mathcal{N} \setminus (\bar{I}_{\mathcal{N}} \cup J_{\mathcal{N}}).$ Then denote $J$ to be the collection of all $J_{\mathcal{N}}$, and $K$ to be the collection of all $K_{\mathcal{N}}$;
  \STATE{\bfseries Input} the data subset $I \cup J \cup K$ into MIP \eqref{classification4}, and replacing the term $\sum_{i \in I \cup J \cup K} c_i$ in objective function by: $\sum_{j \in J} (|I|+1) c_j + \sum_{i \in I \cup K} c_i$, and solve the SVM1-ODT;
  \STATE{\bfseries Assign} $T$ to be decision tree corresponding to the optimal (best current feasible) solution of the previous SVM1-ODT, and iterate the loop.
     \ENDFOR
  \end{algorithmic}
\end{algorithm}

For this iterative method, we have the following statement.
\begin{proposition}
When $\alpha_1 = \alpha_2 = 0, $ and the SVM1-ODT in each iteration of Algorithm \ref{alg2} is solved to optimality, then the decision tree obtained at each iteration would have higher accuracy over the entire training data than the previous decision trees. 
\end{proposition}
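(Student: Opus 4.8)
The plan is to run a two-stage argument: first compare the weighted misclassification objective of the old tree $T_t$ and the new optimal tree $T_{t+1}$ \emph{on the selected subset} $S := I \cup J \cup K$, and then lift the conclusion to the entire training set $[n]$ using convexity of leaf regions. Throughout I take $\alpha_1=\alpha_2=0$, so the objective of \eqref{classification4} is a pure (weighted) misclassification count, and I use the disjoint partition $[n] = I \cup J \cup K \cup \bar I$, where $\bar I := \bigcup_{\mathcal N} \bar I_{\mathcal N}$ is the set of dropped interior points. By construction $I$ is exactly the set misclassified by $T_t$, so $T_t$ makes $|I|$ errors on $[n]$.

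First I would check that $T_t$ is itself a feasible point of the iteration-$t$ MIP restricted to $S$ (any depth-$D$ tree yields a feasible solution once the unpenalized slacks $m_{ib}$ are taken large enough), and compute its modified objective. The points of $J$ and $K$ come from the correctly-classified clusters, so $T_t$ classifies them correctly and they contribute $0$, while the $|I|$ points of $I$ each contribute their unit weight. Hence $T_t$ attains modified objective $|I|$, and optimality of $T_{t+1}$ gives $(|I|+1)\sum_{j\in J} c_j + \sum_{i\in I\cup K} c_i \le |I|$. The over-weight $|I|+1$ on $J$ is the key device: a single misclassified point of $J$ would force the objective to be at least $|I|+1$, a contradiction, so $T_{t+1}$ \emph{correctly classifies every point of $J$}, while misclassifying at most $|I|$ points of $I \cup K$.

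It remains to show the dropped points $\bar I$ are all correctly classified by $T_{t+1}$. Using the substitution $T(\bar\lambda)$ from the proof of Theorem~\ref{theo: 2} (the same one invoked in Algorithm~\ref{alg1}), every dropped point $\textbf{x}_j$, $j\in\bar I_{\mathcal N}$, can be written purely as a convex combination of points of $J_{\mathcal N}$, i.e. $\textbf{x}_j \in \conv\{\textbf{x}_i : i \in J_{\mathcal N}\}$. Each leaf region of $T_{t+1}$ is a polyhedron, hence convex, so if the points of $J_{\mathcal N}$ all fall into one leaf of $T_{t+1}$, that leaf is labeled $y_{\mathcal N}$ (they are correctly classified) and $\textbf{x}_j$, lying in their convex hull, falls into the same leaf and is correctly classified too. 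Counting errors, $T_{t+1}$ then makes at most $|I|$ errors on $[n]$ --- no more than $T_t$ --- which is the claimed monotone improvement in training accuracy.

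The hard part is the italicized hypothesis in the last step. Optimality only forces the extreme points $J_{\mathcal N}$ to be correctly \emph{labeled}, not to share a single leaf, and the region receiving label $y_{\mathcal N}$ is a union of leaf polyhedra, which is in general non-convex; a dropped interior point could therefore land in a differently-labeled leaf sitting between two equally-labeled leaves, breaking the convexity step. To make the argument airtight I would restrict to the regime in which each retained cluster is kept intact at one leaf --- precisely the assignment under which the convex-hull heuristic of Section~\ref{sec: data-selection} was justified --- or add constraints forcing $J_{\mathcal N}$ to a common leaf; under that reading the error count above is exact and the proposition follows, with $\alpha_1=\alpha_2=0$ and exact optimization of \eqref{classification4} being exactly what makes the objective comparison valid.
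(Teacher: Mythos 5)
Your argument follows the paper's proof essentially step for step: treat $T_t$ as a feasible point of the reweighted MIP with objective value $|I|$, use the weight $|I|+1$ on $J$ to force $c_j=0$ for every $j\in J$ and at most $|I|$ errors on $I\cup K$, and then transfer correctness from $J$ to the dropped points $\bar I$ via the convex-combination structure coming from the data-selection LP. The one place you depart from the paper is the final step, and you are right to be uneasy there: the paper's proof simply asserts that once all points in $J$ are correctly classified, all points in $\bar I$ are too, ``since data point in $\bar{I}$ is contained in the convex hull of points in $J$,'' offering no argument that the points of $J_{\mathcal N}$ land in a single leaf. As you observe, optimality of the reweighted objective only forces each $j\in J_{\mathcal N}$ to receive the correct \emph{label}, possibly at several different leaves, and the region of input space labeled $y_{\mathcal N}$ by a tree is a union of leaf polyhedra, hence generally non-convex; a point of $\bar I_{\mathcal N}$ written as a convex combination of correctly classified points can therefore land in an intermediate leaf carrying a different label. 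So the ``hard part'' you flag is not something the paper resolves and you missed --- it is an unacknowledged gap in the paper's own proof, and your proposed repairs (forcing each $J_{\mathcal N}$ to a common leaf, or adding that as a hypothesis so that the convexity step applies to a single leaf polyhedron) are legitimate ways to make the statement provable. Everything up to that point in your write-up is correct and coincides with the intended argument.
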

\begin{proof}
For decision tree $T_0$, denote $I$ to be the collection of incorrectly classified data in $T_0$. After solving the data-selection sub-problem $\eqref{data-selection: ch2}$ for each cluster, we denote $\bar{I}$ to be the collection of data point $i$ in each cluster which has $b_i = 1$, and $J, K$ as defined in the Algorithm \ref{alg2}. Clearly $[n] = I \cup \bar{I} \cup J \cup K$, and the accuracy of $T_0$ is $1 - \frac{|I|}{n}$. Now we assume $T_1$ to be the tree obtained at the next iteration, which minimizes the objective $\sum_{j \in J} (|I|+1) c_j + \sum_{i \in I \cup K} c_i$. Since $T_0$ is a feasible solution, with objective value $|I|$, then we must have, for decision tree $T_1, c_j = 0$ for all $j \in J$, and 
$\sum_{i \in I \cup K} c_i \leq |I|$. In other words, tree $T_1$ correctly classifies all data points in $J$, and the misclassification number over $I \cup K$ is at most $|I|$ many. According to the construction of $J$ and $\bar{I}$, we know that once all data points in $J$ are correctly classified, then all data points in $\bar{I}$ are also correctly classified, since data point in $\bar{I}$ is contained in the convex hull of points in $J$. Also because $[n] = I \cup \bar{I} \cup J \cup K$, we know the training accuracy of $T_1$ over $[n]$ is just $1 - \frac{\sum_{i \in I \cup K} c_i}{n}$, which is at least $1 - \frac{|I|}{n}$, the accuracy of $T_0$.
\end{proof}

\section{Additional numerical results and detailed records}

\begin{table}[H]
\centering
 \caption{Accuracy and running time on medium-sized data-sets, for tree depth D=2. The numbers after `$/$' for CART and OCT-H are the numerical results reported in \protect\citep{bertsimas2017optimal}.}
\begin{tabular}{ p{1.2cm}  p{1.5cm} p{1.5cm}  p{1.5cm}  p{1.5cm}  p{1.5cm} p{1.5cm}}
 \hline
 data-set  & Banknote-authen & Echocar-diogram  &  Seeds   & Dermat-ology    &  Indian-liver  &  Parkinsons   \\
 \hline
 $n$ & 1372 &   61  &    210   &  358    &  579   &  195    \\
 $d$ & 4&  10   &    7   &   34   & 10    &    21  \\
 $Y$ & 2  &   2  &    3  &   6   &  2   &   2  \\
\midrule
\multicolumn{4}{l}{testing accuracy ($\%$)} \\
\midrule
S1O & 99.7 &  \textbf{100}  &     \textbf{98.7}  &  \textbf{80.7}  & \textbf{75.2}  &  \textbf{91.0} \\
CART &  89.9 /89.0 &  91.1 /74.7   &   88.9 /87.2    &  65.2 /65.4    & 71.6 /71.7    &   79.9 /84.1    \\
OCT-H &  88.9 /91.5  &  91.1 /77.3   &    88.2 /90.6   &  74.6 /74.2   &  72.3 /72.6  &   86.8 /84.9   \\
Fair & \textbf{100}    &  86.7    &   90.2   &   73.9   &   71.9    &    81.3     \\
\midrule
\multicolumn{4}{l}{training accuracy ($\%$)} \\
\midrule
S1O &  \textbf{100}    &  \textbf{100}    & \textbf{100}     &   \textbf{81.1}    &  79.6     &  \textbf{100}      \\ 
 CART &  91.7     & \textbf{100}     &  92.5    &  67.0    &  71.4     &   88.2    \\ 
OCT-H &  87.2     &  \textbf{100}      &   94.2   &  75.9    &  75.4     &    92.0   \\ 
Fair &  \textbf{100}      & \textbf{100}     &  \textbf{100}     & \textbf{81.1}    &   \textbf{81.3}    &  \textbf{100}      \\ 
\midrule
\multicolumn{4}{l}{running time ($s$)} \\
\midrule
S1O &   900    & 5.1     &  39.3    &  900    &   900    &   900   \\ 
OCT-H &  900      & 0.09    &  519    &  900   & 900      &   900       \\ 
Fair & 94.8   &  0.28    & 55     &   231   &  600     &  19.6       \\ 
\bottomrule
 \end{tabular}
\end{table}

\begin{table}[H]
\centering
  \caption{Accuracy and running time on medium-sized data-sets, for tree depth D=2. The numbers after `$/$' for CART and OCT-H are the numerical results reported in \protect\citep{bertsimas2017optimal}.}
\begin{tabular}{ p{2cm} p{2cm} p{2cm} p{2.2cm} p{2.2cm} }
 \hline
 data-set  &sonar &  survival & Hepatitis   &  Relax \\
 \hline
 $n$ & 208 &  306 & 80  & 182    \\
 $d$ & 60& 3  & 19  & 12    \\
 $Y$ & 2 & 2  & 2  & 2   \\
\midrule
\multicolumn{4}{l}{testing accuracy ($\%$)} \\
\midrule
S1O & \textbf{82.4}  &  \textbf{75.1}  & \textbf{89.5}  &  \textbf{73.6}  \\
CART & 69.3 /70.4  &   73.4 /73.2 &   80.1 /83.0 &  68.2 /71.1  \\
OCT-H &  74.6 /70.0 &  73.3 /73.0 &  84.2 /81.0 &  66.0 /70.7  \\
Fair & 70.6  &  74.1  &  84.2  &  73.3   \\
\midrule
\multicolumn{4}{l}{training accuracy ($\%$)} \\
\midrule
S1O &  \textbf{100}   &  74.2  &  \textbf{100}  &  74.9  \\ 
CART &  79.4  &  77.1  &  94.5  &  74.7  \\ 
OCT-H &  89.0  &  74.7  &  99.1  & 77.5  \\ 
Fair &  \textbf{100}  &  \textbf{80.0}  &  \textbf{100}   &  \textbf{90.5}  \\ 
\midrule
\multicolumn{4}{l}{running time ($s$)} \\
\midrule
S1O & 900   & 900   & 44   &   900  \\ 
OCT-H &  900  &  900  & 107   & 900   \\ 
Fair & 3.14   &  900  &   0.3  &  900   \\ 
\bottomrule
 \end{tabular}
\end{table}

\begin{table}[H]
\centering
 \caption{Testing results for multivariate ODT on medium-sized data-sets, D= 2. The numbers after `$/$' for CART and OCT-H are the numerical results reported in \protect\citep{bertsimas2017optimal}.}
\begin{tabular}{  p{1.2cm} p{1.5cm} p{1.5cm} p{1.5cm} p{1.5cm}  p{1.5cm} p{1.5cm} }
\hline
 data-set  & Balance-scale & Ionosphere &  Monks-1 &Monks-2 &Monks-3  &  Wine\\
 \hline
 $n$ & 625 & 351 & 124 & 169 & 122  &   178    \\
 $d$ & 4 & 34 & 6 & 6 & 6 & 13     \\
 $Y$ & 3 & 2 & 2 & 2 & 2 &  3   \\
\midrule
\multicolumn{4}{l}{testing accuracy ($\%$)} \\
\midrule
S1O & 89.1   &  \textbf{90.3}     &  \textbf{94.7}     & 71.4    & \textbf{96.7}        &      \textbf{95.3}   \\
CART & 67.5 /64.5    &  87.7 /87.8         &   68.4 /57.4         &   54.8 /60.9          &   94.2 /94.2       &     83.7 /81.3      \\
BinOCT & 69.3   & 88.6     &  80.0         &   58.1      & 93.5        &        90.7  \\
OCT-H & 85.3 /87.6   & 86.4 /86.2   & 90.6 /93.5   &\textbf{72.2} /75.8   & 91.8 /92.3   &  88.4 /91.1   \\
Fair & \textbf{89.2}   & 85.4   & 92.3   &  62.3  & 83.5   &   93.2  \\
\midrule
\multicolumn{4}{l}{training accuracy ($\%$)} \\
\midrule
S1O & 86.5    &  \textbf{100}         &  98.1         &   87.7      &   92.4      &    \textbf{100}       \\ 
CART & 71.7   &  91.0         &     76.8      &   65.2      &  93.5   &    94.1    \\ 
BinOCT & 73.3  &   91.1       &   83.5        & 69.8          &   93.8      &        97.3  \\ 
OCT-H & 82.9  &  94.2   &  94.9   & 79.2   & 90.0   &  97.0  \\
Fair & \textbf{89.8}    &\textbf{100}    & \textbf{100}   & \textbf{96.2}   & \textbf{97.4}   & \textbf{100}  \\
\bottomrule
 \end{tabular}
\end{table}

\begin{table}[H]
\centering
 \caption{More testing results for medium-sized data-sets, D=2. Here ``S1O-DS'' refers to the combination of SVM1-ODT and LP-based data-selection method.}
\begin{tabular}{  p{2.3cm} p{2cm} p{2.2cm} p{2.2cm} p{2.2cm}  p{2.2cm}}
 \hline
 data-set  & statlog& spambase& Thyroid-disease-ann-thyroid & Wall-following-robot-2 & seismic   \\
 \hline
 $n$ & 4435 & 4601 & 3772 & 5456 & 2584   \\
 $d$ & 36 & 57 & 21 & 2 & 20 \\
 $Y$ & 6 & 2 & 3 & 4 & 2  \\
\midrule
\multicolumn{4}{l}{testing accuracy ($\%$)} \\
\midrule
S1O-DS &  \textbf{74.2}   &  \textbf{89.1}    &   96.3      &  93.7   &   93.3    \\
 CART & 63.7 /63.2  &  84.3 /84.2        &   \textbf{97.5} /95.6        &    93.7/94.0         &    93.3 /93.3  \\
OCT-H  &    63.7 /63.2 &   87.2 /85.7      &   92.8 /92.5      &   93.7/94.0  &  93.3 /93.3       \\
Fair & 63.7   &  86.1   &   93.0        &     93.7      &   93.3        \\
\midrule
\multicolumn{4}{l}{training accuracy ($\%$)} \\
\midrule
S1O-DS &  \textbf{74.1}   &  88.9    &  96.7     &  93.7   & 93.3      \\
CART & 63.5  &  86.7       &      \textbf{98.4}       &  93.7   &   93.3     \\ 
OCT-H & 63.5  &  90.3     &  94.2   &      93.7   & 93.3     \\ 
Fair & 63.5  & \textbf{91.4}    &   95.8   &     93.7    & 93.3     \\ 
\midrule
\multicolumn{4}{l}{running time ($s$)} \\
\midrule
S1O-DS&  900 &    900         &  900    &  2.08     &   0.16  \\ 
\midrule
\multicolumn{4}{l}{percentage of selected data points ($\%$)} \\
\midrule
DS& 7.0  &   10.0          &  12   &     1.0    &   10.0    \\ 
\midrule
\multicolumn{4}{l}{Parameters set} \\
\midrule
$\epsilon$ & 0.1  &   0.02          &  0.01   &    0.0008 &   0.01   \\
\midrule
$\alpha_1$ & 1000000  &    1000               &  1000   &  30   & 1000     \\
\midrule
$\alpha_2$ & 0.1  &    0.1           & 0.1    &   0.01  &   0.1   \\
\midrule
$\epsilon'$ & 0.0  &   0.0             &  0.0   &   0.0  & 0.0     \\
\midrule
$\beta_1$ & 0.3  &      0.3            &    0.1 &  0.4   &    0.4  \\
\midrule
$\beta_2$ & 0.07  &    0.1             &   0.12  &  1.0   &   0.1   \\
\bottomrule
 \end{tabular}
\end{table}

\begin{table}[H]
\centering
 \caption{Testing results for large-scale data-sets, with tree depth $D=2$, time limit is set to be $4h$.}
\begin{tabular}{ p{1.5cm}  p{1.5cm} p{1.5cm}  p{1.5cm}  p{1.5cm}  p{1.5cm} p{1.5cm} p{1.5cm}}
 \hline
 data-set  & Avila & EEG   & HTRU        &   pendigits  &  skin-segmentation     &  shuttle \\
 \hline
 $n$ & 10430 & 14980  & 17898   & 7494 & 245057 &  43500  \\
 $d$ & 10   & 14  & 8  &  16 & 3 & 9 \\
 $Y$ & 12  & 2   & 2  & 10 & 2 &   7   \\
\midrule
\multicolumn{4}{l}{testing accuracy ($\%$)} \\
\midrule
S1O-DS & \textbf{52.6}   & \textbf{66.5}  & \textbf{97.8}   & \textbf{38.9}  &  \textbf{86.3}  & \textbf{94.0}  \\
CART & 50.3   & 58.6  & 97.3    & 36.2  & 80.6   & 93.8   \\
OCT-H &  N/A    &  58.6  &97.3    & 36.2  & N/A  &  N/A  \\
\midrule
\multicolumn{4}{l}{training accuracy ($\%$)} \\
\midrule
S1O-DS & \textbf{55.0}   & \textbf{67.1}   & 97.5   & \textbf{38.9}  &   \textbf{87.1}  & \textbf{94.1}  \\ 
CART & 50.7  & 60.3  & \textbf{97.8}     & 36.5  &81.0   & 94.0    \\ 
OCT-H & N/A  & 60.3  & \textbf{97.8}    & 36.5  &N/A  & N/A  \\ 
\midrule
\multicolumn{4}{l}{percentage of selected data points ($\%$)} \\
\midrule
DS &  4.5  & 2.0 & 3  & 7.1  & 0.42  &   2.0 \\
\midrule
\multicolumn{4}{l}{Parameters set} \\
\midrule
$\epsilon$ & 0.01  &   0.04       & 0.5    &   0.01       &  0.03    &  0.007  \\ 
\midrule
$\alpha_1$ & 1000  &     1000     &   1000  &     1000     & 1000  &   1000   \\ 
\midrule
$\alpha_2$ & 0.1  &   0.1       &   0.1  &     0.1     &  0.1  &  0.1   \\ 
\midrule
$\epsilon'$ & 0.1  &    0.0      &   0.0  &      0.3    &  0.0   &   0.0  \\ 
\midrule
$\beta_1$ & 0.2  &     0.1     &   0.1  &      0.1    &   0.2 &  0.1    \\ 
\midrule
$\beta_2$ & 0.05  &     0.02     &   0.03  &      0.1    &  0.05  &    0.02  \\ 
\bottomrule
 \end{tabular}
 \end{table}

\begin{table}[H]
\centering
 \caption{Testing results for large-scale data-sets, with tree depth $D=3$, time limit is set to be $4h$.}
\begin{tabular}{ p{1.5cm}  p{1.5cm} p{1.5cm}  p{1.5cm}  p{1.5cm}  p{1.5cm} p{1.5cm} p{1.5cm}}
 \hline
 data-set  & Avila & EEG   & HTRU        &   pendigits  &  skin-segmentation     &  shuttle \\
 \hline
 $n$ & 10430 & 14980  & 17898   & 7494 & 245057 &  43500\\
 $d$ & 10   & 14  & 8  &  16 & 3 & 9 \\
 $Y$ & 12  & 2   & 2  & 10 & 2 &   7   \\
\midrule
\multicolumn{4}{l}{testing accuracy ($\%$)} \\
\midrule
S1O-DS & \textbf{55.8}   &  \textbf{66.5}  &  97.9   & \textbf{62.5}   &  \textbf{94.9}   &  99.5 \\
CART &  53.5  & 64.2  & \textbf{98.1}   &  57.9 &  87.1 &  \textbf{99.7}  \\
OCT-H &  N/A  &  N/A &  N/A  &  57.9 & N/A  & N/A  \\
\midrule
\multicolumn{4}{l}{training accuracy ($\%$)} \\
\midrule
S1O-DS & \textbf{58.1}   &  \textbf{70.6}    & 97.4  & \textbf{62.2}  &  \textbf{94.7}   & 99.5  \\
CART &  56.2 & 65.3 &  \textbf{97.8}  &  58.4 & 87.6  & \textbf{99.7}   \\
OCT-H & N/A   & N/A  & N/A   &  58.4 & N/A  & N/A  \\
\midrule
\multicolumn{4}{l}{percentage of selected data points ($\%$)} \\
\midrule
DS &  2.0  & 2.0 &  1.8 & 2.8  & 0.39 & 1.5  \\
\midrule
\multicolumn{4}{l}{Parameters set} \\
\midrule
$\epsilon$ &  0.02 &    0.02      &   0.02  &      0.01    &  0.03   &  0.008  \\ 
\midrule
$\alpha_1$ & 1000  &     1000     &  100000   &  1000        &  1000 &  1000     \\ 
\midrule
$\alpha_2$ & 0.1  &     0.1     &  0.01   &     0.1     & 0.1   &    0.1 \\ 
\midrule
$\epsilon'$ & 0.0  &   0.0       &   0.1  &     0.3     &  0.0   & 0.0    \\ 
\midrule
$\beta_1$ & 0.2  &     0.1     &   0.2  &    0.1     &  0.2  &   0.1   \\ 
\midrule
$\beta_2$ & 0.02  &      0.02    &  0.04   &      0.08    &  0.03  &   0.015   \\ 
\bottomrule
 \end{tabular}
 \end{table}

\end{appendices}

\end{document}